\newtheorem{prop}{Proposition}
\newcommand{\norm}[1]{\left\lVert#1\right\rVert}
\DeclareMathOperator*{\argmax}{arg\,max}
\DeclareMathOperator*{\argmin}{arg\,min}
\begin{document}

\title{Generalization on Unseen Domains via Inference-time Label-Preserving Target Projections}

\author{
Prashant Pandey\textsuperscript{\rm 1},
Mrigank Raman\thanks{Equal contribution} \textsuperscript{\rm 1},
Sumanth Varambally\footnotemark[1] \textsuperscript{\rm 1},
Prathosh AP\textsuperscript{\rm 1}\\
\textsuperscript{\rm 1}IIT Delhi\\
\{\tt\small bsz178495, mt1170736, mt6170855, prathoshap\}@iitd.ac.in
}

\maketitle


\maketitle

\begin{abstract}
 Generalization of machine learning models trained on a set of source domains on unseen target domains with different statistics, is a challenging problem. While many approaches have been proposed to solve this problem, they only utilize source data during training but do not take advantage of the fact that a single target example is available at the time of inference. Motivated by this, we propose a method that effectively uses the target sample during inference beyond mere classification. Our method has three components - (i) A label-preserving feature or metric transformation on source data such that the source samples are clustered in accordance with their class irrespective of their domain (ii) A generative model trained on the these features (iii) A label-preserving projection of the target point on the source-feature manifold during inference via solving an optimization problem on the input space of the generative model using the learned metric. Finally, the projected target is used in the classifier. Since the projected target feature comes from the source manifold and has the same label as the real target by design, the classifier is expected to perform better on it than the true target. We demonstrate that our method outperforms the state-of-the-art Domain Generalization methods on multiple datasets and tasks.
\end{abstract}
\vspace{-0.6cm}
\section{Introduction}
\vspace{-0.15cm}

Domain shift refers to the existence of significant divergence between the distributions of the training and the test data \cite{torralba2011unbiased}. This causes the machine learning models trained only on the training or the source data to perform poorly on the test or target data. A naive way of handling this problem is to fine-tune the model with new data which is often infeasible because of the difficulty in acquiring labelled data for every new target domain. The class of Domain Adaptation (DA) methods  \cite{tzeng2017adversarial, ganin2015unsupervised, hoffman2018cycada, sun2016return, long2016unsupervised, bousmalis2017unsupervised, murez2018image, panareda2017open, pandey2020skin} tackle this problem by utilizing the (unlabeled) target data to minimize the domain shift; however they cannot be used when unlabeled target data is unavailable.

Domain generalization (DG) \cite{muandet2013domain,li2017deeper,ghifary2015domain,li2018learning,balaji2018metareg,li2019episodic}, on the other hand, views the problem from the following perspective: how to make a model trained on single or multiple source domains generalize on completely unseen target domains. These methods do so via (i) learning feature representations that are invariant to the data domains using methods such as adversarial learning \cite{li2018domain, li2018deep}, (ii) simulating the domain shift while learning through meta-learning approaches \cite{li2018learning, balaji2018metareg}, and (iii) augmenting the source dataset with synthesized data from fictitious target domains \cite{volpi2018generalizing, zhou2020deep}. These methods have been shown to be effective in dealing with the problem of domain shift. 
However, most of the existing methods do not utilize the test sample from the target distribution available at the time of inference beyond mere classification. On the other hand, it is a common experience that when humans encounter an unseen object, they often relate it to a previously perceived similar object.

Motivated by this intuition, in this paper, we make the following contributions towards addressing the problem of DG: (a) Given samples from multiple source distributions, we propose to learn a source domain invariant representation that also preserves the class labels. (b) We propose to `project' the target samples to the manifold of the source-data features before classification through an inference-time label-preserving optimization procedure over the input of a generative model (learned during training) that maps an arbitrary distribution (Normal) to the source-feature manifold. (c) We demonstrate through extensive experimentation that our method achieves new state-of-the-art performance on standard DG tasks while also outperforming other methods in terms of robustness and data efficiency. 

\vspace{-0.35cm}
\section{Prior Work}
\vspace{-0.2cm}
\textbf{Meta-learning} : Meta-learning methods aim to improve model robustness against unseen domains by simulating domain shift during training. This is done by splitting the training set into a meta-train and meta-test set. \cite{li2018learning} provide a general framework for meta-learning-based DG, where model parameters are updated to minimize loss over the meta-train and meta-test domains in a coordinated manner. \cite{balaji2018metareg} propose a pre-trained regularizer network which is used to regularize the learning objective of a domain-independent task network. \cite{d2018domain} use a common feature extractor backbone network in conjunction with several domain-specific aggregation modules. An aggregation over these modules is performed during inference to predict the class label. \cite{li2019episodic} train separate feature extractors and classifiers on each of the source domains and minimize the loss on mismatched pairs of feature extractors and classifiers to improve model robustness. \cite{hu2019domain} utilise a probabilistic meta-learning model in which classifier parameters shared across domains are modeled as distributions. They also learn domain-invariant representations by optimizing a variational approximation to the information bottleneck. Since meta-learning methods are only trained on the simulated domain shifts, they might not always perform well on target domains that are not `covered' in the simulated shifts. \newline
\textbf{Data augmentation}: Augmenting the dataset with random transformations improves generalization \cite{hernandez2018further}. Commonly used augmentation techniques include rotation, flipping, random cropping, random colour distortions, amongst others.
\cite{shankar2018generalizing} use gradients from a domain classifier to perturb images. However, these perturbations might not be reflective of practically observed domain shift. \cite{zhou2020deep} aim to address this issue using an adversarial procedure to train a transformation network to produce an image translation that aims to generate novel domains while retaining class information. \cite{zhou2020learning} generate images from pseudo-novel domains with an optimal transport based formulation while preserving semantic information with cycle-consistency and classification losses. \cite{qiao2020learning} solve the problem of single-source DG by creating fictitious domains using Wasserstein Auto-Encoders in a meta-learning framework. While these generated domains differ significantly from the source domains, they potentially do not reflect practical domain differences.\newline
\textbf{Domain-invariant representations}: Another common pervasive theme in domain generalization literature is transforming the source data into a lower-dimensional `feature' space that is invariant to domains but retains the discriminative class information; these features are used for classification. \cite{ghifary2015domain} learn an auto-encoder to extract domain invariant features by reconstructing inter and cross domain images. \cite{li2018domain} use adversarial auto-encoders to align the representations from all the source domains to a Laplacian prior using adversarial learning procedure. \cite{dou2019domain} employ episodic training to simulate domain shift while minimizing a global class-alignment loss and local sample-clustering objective to cluster points class-wise. \cite{hu2019domain} learn a kernel function that minimizes mean domain discrepancy and intra-class scatter while maximizing mean class discrepancy and multi-domain between-class scatter. \cite{piratla2020efficient} propose a low-rank decomposition on the final classification layer to identifiably learn common and specific features across domains. \cite{seo2019learning} use domain-specific normalizations to learn representations that are domain-agnostic and semantically discriminative.

All of the above methods require domain labels, which might not always be viable. \cite{carlucci2019domain} aim to solve the problem of DG without domain labels by learning an auxiliary task of solving jigsaw puzzles. The idea is that features learned from such an auxiliary task will be invariant of the domains. \cite{matsuura2020domain} first assigns pseudo-labels inferred by clustering the domain discriminative features. They train a domain classifier against these pseudo-labels, which is further used to adversarially train a domain-invariant feature extractor. \cite{motiian2017unified} use a semantic alignment loss as an additional regularizer while training the classifier for domain invarient feature learning. \cite{huang2020self} iteratively locate dominant features activated on the training data using layer gradients, and learn useful features by self-challenging. \cite{wang2020learning} learn how to generalize across domains by simultaneously providing extrinsic supervision in the form of a metric learning task and intrinsic supervision in terms of a self-supervised auxiliary task. The most similar method to our own is \cite{sun2020test} in that they also run an inference-time procedure. However, unlike our method, they use the test sample for updating model parameters. 
\vspace{-0.31cm}
\section{Proposed Method}
\vspace{-0.1cm}
\begin{figure*}
\setlength\belowcaptionskip{-0.3cm}
\centering
  \scalebox{1}{
  \includegraphics[width=0.68\textwidth]{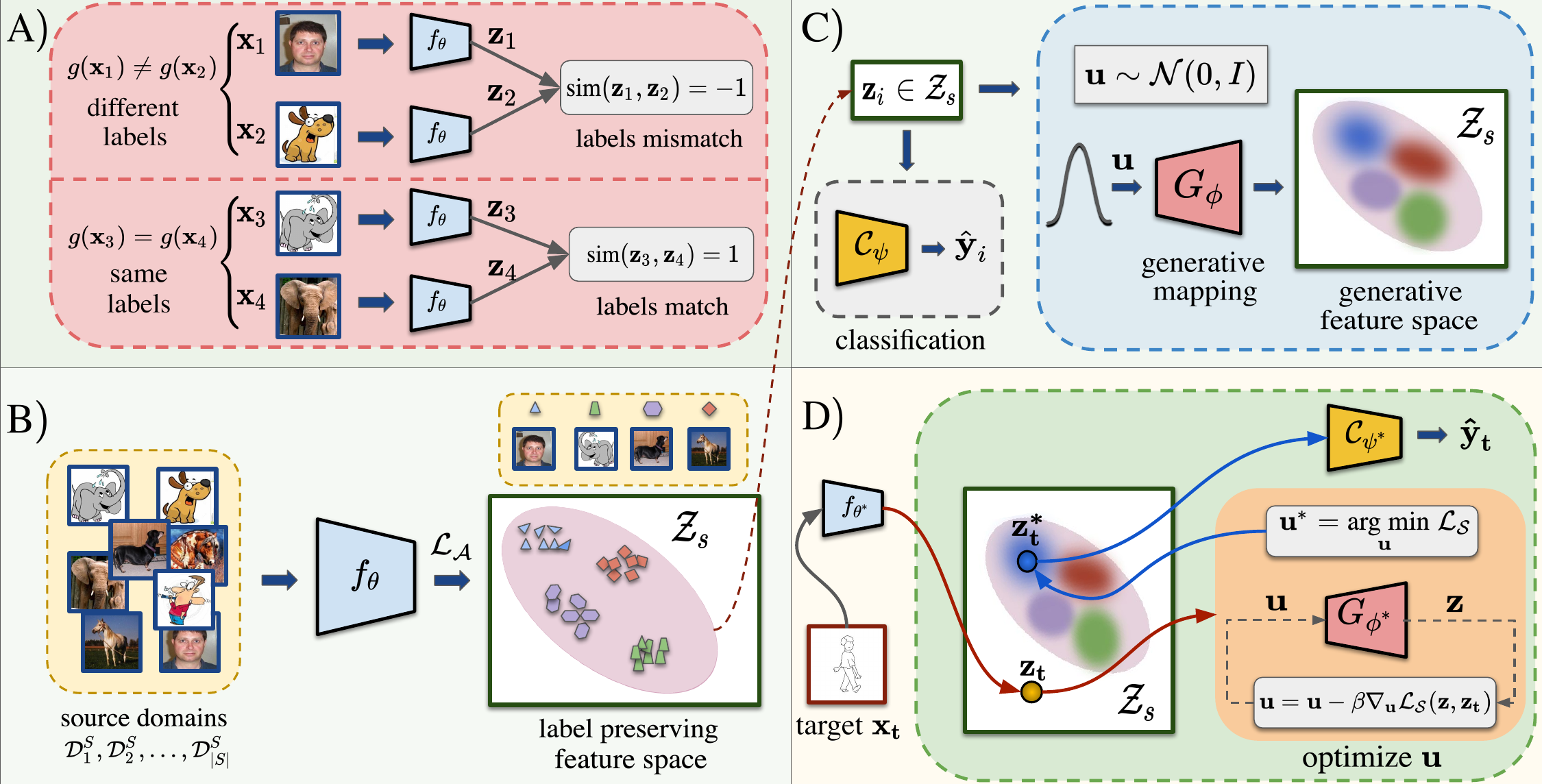}
  }
  \caption {\textbf{A)} We design a function $f$ (neural network $f_\theta$) to learn a label-preserving metric that produces a similarity score of 1 when the ground truth labels (given by function $g$) between a pair of images match and -1 otherwise. The function `sim' refers to the cosine similarity function. \textbf{B)} $f$ is implemented using a neural network $f_\theta$. During training, the examples from the source domains are utilized to create a source manifold $\mathcal{Z}_s$ using loss $\mathcal{L_A}$ such that the features on the manifold are implicitly \textit{clustered} to preserve the labels of examples. \textbf{C)} A classifier $\mathcal{C}_\psi$ and a generative model $G_\phi$ are trained on the label-preserving features from manifold $\mathcal{Z}_s$ such that $G_\phi$ learns to map a Gaussian vector $\mathbf{u}$ to a point on the manifold $\mathcal{Z}_s$. \textbf{D)} During inference, $f_{\theta^\ast}$ projects target $\mathbf{x_t}$ to a point  $\mathbf{z_t}$ on the label-preserving feature space. We propose an inference-time procedure to project the target feature to a point $\mathbf{z_t^\ast}$ on the source manifold which is finally classified to predict its label $\mathbf{\hat{y}_t}$. $\theta^\ast$, $\psi^\ast$ and $\phi^\ast$  indicate that the weights of their corresponding  networks are fixed during inference.}
  \label{fig:overall_train}
\end{figure*}

\subsection{Problem Setting and Method Overview}
Let $\mathcal{X}$ and $\mathcal{Y}$ respectively denote data and the label spaces. Let $\mathcal{H}$ be the space of hypotheses where each hypothesis $h$ in $\mathcal{H}$ maps points from $\mathcal{X}$ to a label in $\mathcal{Y}$. A domain is defined by the tuple $(\mathcal{D}, g_\mathcal{D})$ where $\mathcal{D}$ is a probability distribution over $\mathcal{X}$ and $g_\mathcal{D}$ where $g_\mathcal{D}:\mathcal{X} \to \mathcal{Y}$ is a function that assigns  ground-truth labels. It is generally assumed that the ground-truth labeling function $g$ is same across all the domains. Domain Generalization is defined as the task where there are a total of $N$ domains out of which $|S|$ are source and $|T|$ are target domains. The source and target domains are respectively denoted by $\mathcal{D}_{i}^{S}$,  $i \in [|S|]$ and $\mathcal{D}_{j}^{T}$, $j \in [|T|]$. The objective is to train a classifier on the source domains that predicts well on the target domain when the target samples are not available during training.

The motivation for our method comes from the following observation: DG methods that learn domain invariant representations do so only using the source data. Therefore, classifiers trained on such representations are not guaranteed to perform well on target data that is outside the source data manifolds. Hence, performance on the target data can be improved if the target sample is projected on to the manifold of the source features such that the ground-truth label is preserved, before classification. To this end, we propose a three-part procedure for domain generalization:\\
1. \textbf{(Training):} Learn a label-preserving domain invariant representation using source data. We first transform the data from multiple source domains into a space where they are clustered according to class labels, irrespective of the domains and build a classifier on these features.\\
2. \textbf{(Training):} Learn to generate features from the domain invariant feature manifold created from the source data by constructing a generative model on it.\\ 
3. \textbf{(Inference):} Given a test target sample, project it on to the source-feature manifold in a label-preserving manner. This is done by solving an inference-time optimization problem on the input space of the aforementioned generative model. Finally, classify the projected target feature.
    
Note that the parameters of all the networks involved (Feature extractor, generative model and the classifier) are learned only using the source data and fixed during inference. Thus, our approach is well within the realm of DG despite a label-preserving optimization problem being solved for every target sample during inference. The overall procedure is depicted in Figure \ref{fig:overall_train}. In the subsequent sections, we describe all the aforementioned components in detail.
\vspace{-0.2cm}
\subsection{Label-Preserving Transformation}
\vspace{-0.2cm}
\subsubsection{Domain invariant Features}
\vspace{-0.15cm}
The first step in our method is to learn a feature (metric) space such that the input images are clustered in accordance with their labels (classes) irrespective of their domains.   explicitly construct such a feature space $\mathcal{F}$ with a function $f:\mathcal{X} \to \mathcal{F}$ by solving the following optimization problem.
\vspace{-0.2cm}
{\small
\begin{argmini}
  {\tilde{f}}{\sum_{j=1}^{N}{{\sum_{i=1}^{N}{ (-1)^{\alpha(i,j)}\norm{\tilde{f}(\mathbf{x}_{i}) - \tilde{f}(\mathbf{x}_{j})}^{2}}}}}
    {}{}
    \addConstraint{\norm{\tilde{f}(\mathbf{x}_{k})}}{=1\ \forall k \in [N]}
\label{addm_opt}
\end{argmini}
\vspace{-0.3cm}
\begin{align*}
\alpha(i,j) = \begin{cases} 
          0 & g(\mathbf{x}_{i})=g(\mathbf{x}_{j}) \\
          1 & \text{otherwise} 
       \end{cases}
\end{align*}
}
 The function $f$ is learned such that when a pair of source samples have the same ground truth labels, the norm of the difference between their representations under $f$ is low irrespective of their domain membership and high when they belong to different classes. Under this formulation, the features in the space $\mathcal{F}$ will be `clustered' in accordance with their class labels, irrespective of the domains. In fact, it can be shown that the aforementioned  $f$ minimizes the $\mathcal{H}$-divergence \cite{ben2010theory} between any two pairs of domains.
\begin{prop}

The label-preserving transformation $f$ defined in Eq. 1, reduces the $\mathcal{H}$-divergence between any two pair of domains on which it is learned (Proof in Appendix).
\end{prop}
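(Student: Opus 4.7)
The plan is to show that the transformation $f$ learned via Eq.~1 drives the feature distributions of any two source domains together, and then invoke the definition of $\mathcal{H}$-divergence from Ben-David et al.\ to conclude. Recall that $d_\mathcal{H}(\mathcal{D}_1,\mathcal{D}_2) = 2\sup_{h\in\mathcal{H}} |\Pr_{x\sim\mathcal{D}_1}[h(x)=1] - \Pr_{x\sim\mathcal{D}_2}[h(x)=1]|$, so it suffices to argue that the pushforward distributions $f_\#\mathcal{D}_1$ and $f_\#\mathcal{D}_2$ become close, since then no hypothesis in $\mathcal{H}$ (composed with $f$) can discriminate between them.

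First, I would analyse the ideal minimizer of Eq.~1. The objective is a sum over pairs with coefficient $+1$ for same-class pairs (pulling features together) and $-1$ for different-class pairs (pushing features apart), subject to unit-norm. In the idealized minimum, all samples of a fixed class $y$ collapse to a single representative $\mu_y$ on the unit sphere, regardless of their source domain; distinct classes are spread apart. Consequently, the class-conditional pushforward $f_\#(\mathcal{D}_i^S \mid Y=y)$ is the Dirac mass $\delta_{\mu_y}$ for every source domain $i$, i.e.\ identical across domains.

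Next, I would combine this with the standard DG assumption that the label marginals are shared across source domains (the paper notes $g$ is the same function on every domain, and in practice class priors are matched). Writing $f_\#\mathcal{D}_i^S = \sum_y \Pr_i(Y=y)\, f_\#(\mathcal{D}_i^S\mid Y=y)$ and substituting the per-class collapse, the marginal pushforward becomes $\sum_y \Pr(Y=y)\,\delta_{\mu_y}$, which does not depend on the domain index $i$. Therefore the two pushforward distributions coincide, which makes $\sup_h |\Pr_{\mathcal{D}_1}[h\circ f=1] - \Pr_{\mathcal{D}_2}[h\circ f=1]| = 0$ and hence $d_\mathcal{H}(f_\#\mathcal{D}_1, f_\#\mathcal{D}_2) = 0$, strictly less than the typical $\mathcal{H}$-divergence in the raw input space.

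The main obstacle is the quantitative rather than ideal statement: in practice the loss is not driven to zero, and the class-conditional features are only approximately clustered. To handle this, I would introduce the within-class scatter $\epsilon := \max_y \mathbb{E}[\|f(X)-\mu_y\|^2 \mid Y=y]$ and bound $d_\mathcal{H}(f_\#\mathcal{D}_1,f_\#\mathcal{D}_2)$ by a monotone function of $\epsilon$ (e.g.\ via a Lipschitz/smoothness assumption on $\mathcal{H}$, or a Wasserstein bound followed by the standard inequality relating integral probability metrics to $\mathcal{H}$-divergence for bounded hypothesis classes). Since the objective in Eq.~1 directly penalizes $\epsilon$ through the same-class terms, minimizing it gives a monotone reduction in $d_\mathcal{H}$ compared with the untransformed domains. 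The delicate step is justifying that the different-class repulsion term does not spoil this bound; I would handle it by noting that it only enlarges the inter-cluster separation and therefore does not affect the intra-cluster concentration argument above.
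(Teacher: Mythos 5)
Your proof is correct and rests on the same key idealization as the paper's: under perfect optimization of Eq.~1, all samples of a class collapse to a single unit-norm representative $\mu_y$ independent of domain, and combined with matched class priors this renders the source feature distributions indistinguishable, so the $\mathcal{H}$-divergence in feature space is zero. The execution differs slightly, and in your favor: the paper restricts to binary classification and enumerates the possible behaviours of a hypothesis $h'$ on the collapsed features (constant $0$, constant $1$, or class-aligned), invoking the equal-prior assumption only in the last case, whereas you observe directly that the pushforward measures $f_\#\mathcal{D}_i^S = \sum_y \Pr(Y=y)\,\delta_{\mu_y}$ coincide across domains, which kills the supremum over \emph{any} hypothesis class without case analysis and extends verbatim to the multi-class setting. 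Two caveats: first, the paper does not attempt to prove that the minimizer of Eq.~1 actually achieves within-class collapse -- it simply \emph{assumes} the existence of an $f$ with $d(f(x_1),f(x_2))=0 \iff g(x_1)=g(x_2)$ and notes that in practice this need only hold approximately; your assertion about the "idealized minimum" is the same idealization and should be stated as a hypothesis rather than a derived property of the optimizer (for more than two classes the repulsion term does not admit a clean closed-form optimum on the sphere). Second, your quantitative extension via the within-class scatter $\epsilon$ goes beyond anything in the paper and would indeed need the extra Lipschitz or bounded-complexity assumptions you mention; it is a reasonable sketch but is not required to match the proposition as stated.
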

\vspace{-0.15cm}
In summary, the proposed feature transformation merges multiple source domains into a single feature domain such that images that have the same labels cluster into a group in the feature space. 
\vspace{-0.55cm}
\subsubsection{Learning the $f$-function}
\vspace{-0.23cm}
We propose to learn ${f}$ by parameterizing it with a deep neural network, $f_\theta$. It is easy to see that the objective function in Eq. \ref{addm_opt} reduces to an optimization of cosine-similarity between the pair of samples $f_\theta(\mathbf{x}_{i})$ and $f_\theta(\mathbf{x}_{j})$. Suppose $\mathbf{z}_i=f_{\theta}(\mathbf{x}_{i})$ and $\mathbf{z}_j=f_{\theta}(\mathbf{x}_{j})$ represent the feature vectors of inputs,  the cosine-similarity $\mathbf{s}_{i,j}$ between $\mathbf{z}_i$ and $\mathbf{z}_j$ is given by,
\begin{equation}
\label{addm_sim_loss}
    \mathbf{s}_{i,j} = \frac{\mathbf{z}_i \cdot \mathbf{z}_j}{\left\lVert \mathbf{z}_i \right\rVert\left\lVert \mathbf{z}_j \right\rVert} 
\end{equation}
Note that the optimization problem in Eq. \ref{addm_opt} seeks $\mathbf{s}_{i,j}$ to be high when the labels are same and low when they are different (denoted by the $\alpha(i,j)$ term in Eq. \ref{addm_opt}). Thus, we first translate $\mathbf{s}_{i,j}$ into logits for a sigmoid activation and use binary cross entropy on the generated probabilities.  However, since $-1 \leq \mathbf{s}_{i,j} \leq 1$, we scale it with a small positive constant $\tau$ (typically 0.1) to widen the range of the generated logits. Mathematically, we can write $\mathbf{p}_{i,j} = \text{sigmoid}(\mathbf{s}_{i,j}/\tau)$. Under this formulation, one can treat $\mathbf{p}_{i,j}$ as a similarity score which should be 1 if ($ \mathbf{x}_i, \mathbf{x}_j$) have the same label ($\alpha(i,j)=0$) and 0 otherwise. Thus, we finally use a binary cross entropy loss $\mathcal{L_A}$ between $\mathbf{p}_{i,j}$ and $1 -\alpha(i,j)$, to train the $f_\theta$ network. 
\vspace{-0.1cm}
\subsection{Inference-time Target Projections}
\vspace{-0.1cm}
In a DG setting, the feature transformation mentioned in the previous sections is learned on the source domains. Let $\mathcal{Z}_s$ denote the manifold created by learning such features using the source data. The classifier $\mathcal{C}_\psi$ is trained on points from the source data feature manifold $\mathcal{Z}_s$. Because of the domain shift, the feature $f(\mathbf{x_t})$ corresponding to a test target point $\mathbf{x_t}$ might not fall on the source data feature manifold $\mathcal{Z}_s$. This causes the classifier to fail on the target feature $f(\mathbf{x_t})$. To address this issue, we propose to project or `push' the target feature onto $\mathcal{Z}_s$ while preserving the ground truth label of $\mathbf{x_t}$, so that the classifier can better discern the class label of $\mathbf{x_t}$. We propose to accomplish such a label-preserving projection by solving an inference-time optimization problem on the input space of a generative model trained on $\mathcal{Z}_s$. For convenience, we define a function $\tilde{g}: \mathcal{F} \to \mathcal{Y}$ such that $\tilde{g}(f(x)) = g(x)$

\vspace{-0.35cm}
\subsubsection{Generating the Source-Feature Manifold $\mathcal{Z}_s$}
\vspace{-0.2cm}
Once the label-preserving source-feature manifold $\mathcal{Z}_s$ is obtained through $f_\theta$, we build a generative model on it. That is, a transformation from samples from an arbitrary distribution, such as Normal distribution, to the source-data feature manifold $\mathcal{Z}_s$ is learned. We choose two state-of-the-art neural generative models for this purpose: (a) Variational Auto-Encoder (VAE) \cite{kingma2013auto}: In this setting, a VAE is trained using the source-data features $\mathbf{z}$, by encoding them to produce the latent space $\mathbf{u} \sim \mathcal{N}(0,I)$. A decoder $G_\phi$ reconstructs (generates) the source-feature manifold $\mathcal{Z}_s$ by minimizing a regularized norm-based loss. (b) Generative Adversarial Networks (GAN) \cite{goodfellow2014generative}: Here, a GAN is trained with a generator network $G_\phi$ that maps an arbitrary latent space $\mathbf{u} \sim \mathcal{N}(0,I)$ to the source-feature manifold $\mathcal{Z}_s$. Note that these generative models are trained on the source-features alone and fixed during the inference procedure. We denote the trained generative model by $G_{\phi^\ast}$.
\vspace{-0.4cm}
\subsubsection{Label-preserving Projections}
\vspace{-0.1cm}
The final component of our method is to project the target features on to the source-feature manifold during inference. It is to be noted that the transformation $f_\theta$ is constructed such that when a pair of samples have zero distance in that space, they have same ground truth label. That is, if $\lVert f_\theta(\mathbf{x}_1)-f_\theta(\mathbf{x}_2) \rVert = 0$, then $g(\mathbf{x}_1)=g(\mathbf{x}_2)$. We exploit this property and solve a (per-sample) optimization procedure on the input space of the generative model $G_{\phi^\ast}(\mathbf{u})$ to obtain the target-feature projection. 

Let $\mathbf{z_t}=f_\theta(\mathbf{x_t})$ denote the feature vector corresponding to a test target sample $\mathbf{x_t}$. Our goal is to find the projected target feature in the source-feature manifold $\mathbf{z_t^*} \in \mathcal{Z}_s$ that has the same ground-truth label as that of $\mathbf{z_t}$. By construction of $f_\theta$ (Eq. \ref{addm_opt}), the cosine distance between $\mathbf{z_t}$ and $\mathbf{z_t^*}$ should be low if their ground truth labels are to match. Based on this, we devise the following optimization problem on the input space of $G_{\phi^*}$ to find $\mathbf{z_t^*}$:
\vspace{-0.2cm}
{\small
\begin{align}
\mathcal{L_S} &= \Bigg[1 - {\frac{\mathbf{z_t} \cdot G_{\phi^\ast}(\mathbf{u})} { \left\lVert \mathbf{z_t} \right\rVert \left\lVert G_{\phi^\ast}(\mathbf{u}) \right\rVert} }\Bigg]\\
&\mathbf{u^*} = \argmin_{{\mathbf{u}}} \mathcal{L_S} \label{eqn:latent_opt}\\
&\mathbf{z_t^*} = G_{\phi^\ast}(\mathbf{u^*})
\label{eqn:proj_point}
\end{align}
}
The objective function in Eq. \ref{eqn:latent_opt} seeks to find the projected target feature $\mathbf{z_t^*}$ (via $\mathbf{u^*}$ from Eq. \ref{eqn:proj_point})  whose cosine distance is least from the true target feature $\mathbf{z_t}$. The implicit assumption of our method is that when the target example is projected on to the source manifold, minimizing distance is equivalent to preserving labels.
\vspace{-0.3cm}
\subsubsection{Analysis and Implementation} \label{section:analysis_and_implementation}
\vspace{-0.2cm}
In this section, we analyze the performance of the classifier on account of it using the projected target instead of the real target features. We start by upper bounding the expected value of the misclassification when the projected target is used in the classifier, in the below preposition. 

\begin{prop}
The expected misclassification rate obtained with a classifier $h$ when the projected target is used instead of the true target, obeys the following upper-bound:
\begin{align}
&\mathbb{E}_{(\mathcal{D}^T, \mathcal{D}^{T^\ast})} \left| \tilde{g}(\mathbf{z_t}) - h(\mathbf{z_t^*}) \right| \leq \nonumber \\
    & \underbrace{\mathbb{E}_{\mathcal{D}^{T^\ast}} \left| \tilde{g}(\mathbf{z_t^*}) - h(\mathbf{z_t^*}) \right|}_\text{i} + \underbrace{\mathbb{E}_{(\mathcal{D}^T, \mathcal{D}^{T^\ast})} \left| \tilde{g}(\mathbf{z_t}) - \tilde{g}(\mathbf{z_t^*}) \right|}_\text{ii}
    \label{eqn:risk_ineq}
\end{align}
where $\mathcal{D}$ and $\mathcal{D}^{T^\ast}$ respectively denote the true and the projected target distributions respectively (Proof in Appendix).
\end{prop}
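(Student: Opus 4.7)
The plan is to obtain this bound by a direct triangle inequality applied pointwise under the absolute value, followed by taking expectations. The key observation is that the quantity inside the expectation on the left, $|\tilde{g}(\mathbf{z_t}) - h(\mathbf{z_t^*})|$, measures how far the classifier's output on the projected feature is from the true label of the real target, and we can split this discrepancy into (a) the classifier error on the projected target itself and (b) the label mismatch induced by the projection map.

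Concretely, I would begin by inserting $\pm \tilde{g}(\mathbf{z_t^*})$ inside the absolute value to write
\begin{equation*}
\bigl|\tilde{g}(\mathbf{z_t}) - h(\mathbf{z_t^*})\bigr| = \bigl|\bigl(\tilde{g}(\mathbf{z_t}) - \tilde{g}(\mathbf{z_t^*})\bigr) + \bigl(\tilde{g}(\mathbf{z_t^*}) - h(\mathbf{z_t^*})\bigr)\bigr|,
\end{equation*}
and then apply the triangle inequality to get
\begin{equation*}
\bigl|\tilde{g}(\mathbf{z_t}) - h(\mathbf{z_t^*})\bigr| \leq \bigl|\tilde{g}(\mathbf{z_t^*}) - h(\mathbf{z_t^*})\bigr| + \bigl|\tilde{g}(\mathbf{z_t}) - \tilde{g}(\mathbf{z_t^*})\bigr|.
\end{equation*}
Taking expectation of both sides with respect to the joint distribution $(\mathcal{D}^T, \mathcal{D}^{T^\ast})$ and using linearity gives exactly the right-hand side of Eq.~\ref{eqn:risk_ineq}, after noting that the first term depends only on $\mathbf{z_t^*}$ and so the expectation marginalizes to $\mathcal{D}^{T^\ast}$ alone, which matches the labeling in term (i) of the statement.

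The main subtlety, rather than any hard inequality, is to be careful about the underlying probability space: $\mathbf{z_t^*}$ is not an independent draw from $\mathcal{D}^{T^\ast}$ but is a deterministic function of $\mathbf{z_t}$ (obtained via the inference-time optimization in Eq.~\ref{eqn:latent_opt}--\ref{eqn:proj_point}). I would therefore spell out that $\mathcal{D}^{T^\ast}$ is the pushforward of $\mathcal{D}^T$ under the composition $\mathbf{x_t} \mapsto \mathbf{z_t} \mapsto \mathbf{z_t^*}$, so that the joint law on $(\mathbf{z_t},\mathbf{z_t^*})$ is well-defined and marginalization of the first term is legitimate. Beyond this bookkeeping, no further machinery is required; the result is really just a pointwise triangle inequality plus Tonelli/linearity of expectation, which is why I expect the only obstacle to be the cleanness of the measure-theoretic setup rather than the derivation itself.
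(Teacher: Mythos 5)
Your proposal is correct and follows essentially the same route as the paper's proof: a pointwise triangle inequality after inserting $\pm\tilde{g}(\mathbf{z_t^*})$, followed by taking expectation over the joint distribution and marginalizing term (i) to $\mathcal{D}^{T^\ast}$. Your additional remark that $\mathcal{D}^{T^\ast}$ should be understood as the pushforward of $\mathcal{D}^T$ under the projection map is a welcome clarification the paper leaves implicit, but it does not change the argument.
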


The term (i) in Eq. \ref{eqn:risk_ineq} is the misclassification error of $h$ on the projected target and term (ii) is the difference between the ground truth labels of the true and the projected targets. Given that our overall objective is to minimize the LHS of Eq. \ref{eqn:risk_ineq}, the optimization procedure in the previous section aims to minimize term (ii) while term (i) is expected to be less since the projected target is expected to lie on the source-feature manifold $\mathcal{Z}_s$ on which the classifier is trained.

In the implementation, during inference, we optimize the objective in Eq. \ref{eqn:latent_opt} (reducing term (ii) in Eq. \ref{eqn:risk_ineq}) by gradient descent. A discussion on the choice of stopping criteria can be found in Section \ref{section:stopping_criteria}. The training and the inference procedures are detailed in Algorithm 1 and shown in Figure \ref{fig:overall_train}. 

\begin{algorithm}
\footnotesize
\SetAlgoLined
\vspace{2pt}
\textbf{Training}\\
\vspace{-5pt}
\hrulefill\\
\textbf{Input:} Batch size $N$, learning rate $\eta$, source data $\{(\mathbf{x}_k, \mathbf{y}_k)\}$;\\
\KwResult{Trained $f_{\theta^\ast}$, generative model $G_{\phi^\ast}$, classifier $\mathcal{C_{\psi^\ast}}$ }
 \For{ \text{sampled minibatch} $\{(\mathbf{x}_k, \mathbf{y}_k)\}^N_{k=1}$ } {
  
\For{all $i \in \{1,...N\}$ and $j \in \{1,...N\}$ } {
$(\mathbf{z}_i, \mathbf{z}_j)  \leftarrow (f_{\theta}(\mathbf{x}_i), f_{\theta}(\mathbf{x}_j))$ \\ 
\vspace{2pt}
$ \mathbf{s}_{i,j} \leftarrow \frac{\mathbf{z}_i \cdot \mathbf{z}_j} { \lVert \mathbf{z}_i \rVert \lVert \mathbf{z}_j \rVert} $ \\
\vspace{2pt}
$\mathbf{y}_{i,j} = \delta_{\mathbf{y}_i, \mathbf{y}_j}$

$\mathbf{p}_{i,j}\leftarrow \text{sigmoid}(\mathbf{s}_{i,j}/\tau)$\\
}

$\mathcal{L_\mathcal{A}} \leftarrow \frac{1}{N^2} \sum_{i=1}^{N} \sum_{j=1}^{N} \text{BCELoss}(\mathbf{p}_{i,j}, \mathbf{y}_{i,j})$\\
\vspace{2pt}
$\theta \leftarrow \theta - \eta \nabla_{\theta} \mathcal{L_A}$\\
}
Train $G_\phi$ and $\mathcal{C_\psi}$ on $\{(f_\theta(\mathbf{x}_k), \mathbf{y}_k)\}$.\\
\hrulefill\\
\textbf{Inference}\\
\vspace{-5pt}
\hrulefill\\
\textbf{Input:} Target image $\mathbf{x_t}$, trained network $f_{\theta^\ast}$, generative model $G_{\phi^\ast}$, classifier $\mathcal{C_{\psi^\ast}}$, iteration rate $\beta$;\\
\KwResult{Target label $\mathbf{\hat{y}_t}$}
$\mathbf{z_t} \leftarrow f_{\theta^\ast}(\mathbf{x_t})$; \\
 Sample $\mathbf{u}$ from $\mathcal{N}(0,I)$;\\
 Initialize $U$ and $L$ as empty lists\\
\For{all $i \in \{1,...M\}$} {
  $\mathbf{z} \leftarrow G_{\phi^\ast} (\mathbf{u})$\\
 $\mathcal{L_S} \leftarrow 1 - \frac{\mathbf{z} \cdot \mathbf{z_t}} { \left\lVert \mathbf{z} \right\rVert \left\lVert \mathbf{z_t} \right\rVert}$\\
 \vspace{2pt}
 $\mathbf{u} \leftarrow \mathbf{u} - \beta \nabla_{\mathbf{u}} \mathcal{L_S}$\\
 $(U[i], L[i]) \leftarrow (\mathbf{u}, \mathcal{L_S})$\\
 }
 Smoothen $L$ by window-averaging\\
 $\mathbf{u}^\ast \leftarrow U[\argmax_{{i}} \delta^2 L]$\\
 $\mathbf{\hat{y}_t} \leftarrow \mathcal{C_{\psi^\ast}}(G_{\phi^\ast}(\mathbf{u}^\ast))$
\caption{Inference-time Target Projections}

\end{algorithm}

\setlength{\textfloatsep}{1\baselineskip plus 0.2\baselineskip minus 0.5\baselineskip}
\vspace{-0.4cm}
\section{Experiments and Results}
\vspace{-0.12cm}
We have considered four standard DG datasets - PACS \cite{li2017deeper}, VLCS \cite{Fang_2013_ICCV}, Office-Home \cite{venkateswara2017deep} and Digits-DG \cite{zhou2020deep} to demonstrate the efficacy of our  method. All these datasets contain four domains out of which three are used as sources and the other as a target in a leave-one-out strategy.  We use a VAE as the generative model $G_\phi$ in all our main results owing to its stability of training vis-a-vis a GAN. The performance metric is the classification accuracy and we compare against a baseline Deep All method: classifier trained on the combined source domains without employing any DG techniques. For each target domain, we have reported our average and standard deviation for five independent runs of the model. We also compare our method with the existing DG methods and report the results, dataset wise. We report the standard deviation as 0 for models which have not reported them. For each dataset, we use the validation set for selecting hyperparameters if it is available. Otherwise, we split the data from source domains and use the smaller set for hyperparameter selection. We also use data augmentation for regularizing the network $f_\theta$. For further details on the datasets, machine configuration and choice of hyperparameters, please refer to the Appendix. 

\setlength{\textfloatsep}{1\baselineskip plus 0.2\baselineskip minus 0.5\baselineskip}
\begin{table}[hbt!]
\centering
  \scalebox{0.69}{
  \begin{tabular}{lccccc}
    \toprule
        {Method} & {Art.}
        & {Cartoon}
        & {Sketch} & {Photo} & {Avg.}
        \\
    \midrule
    \multicolumn{6}{c}{AlexNet}\\
    \midrule
    Deep All&65.96$\pm$0.2&69.50$\pm$0.2&59.89$\pm$0.3&89.45$\pm$0.3&71.20\\
    Jigen \cite{carlucci2019domain}&67.63$\pm$0.0&71.71$\pm$0.0&65.18$\pm$0.0&89.00$\pm$0.0&73.38\\

    MMLD \cite{matsuura2020domain}&69.27$\pm$0.0&72.83$\pm$0.0&66.44$\pm$0.0&88.98$\pm$0.0&74.38\\
    MASF \cite{dou2019domain}&70.35$\pm$0.3&72.46$\pm$0.2&67.33$\pm$0.1&90.68$\pm$0.1&75.21\\
    EISNet \cite{wang2020learning}&70.38$\pm$0.4&71.59$\pm$1.3&70.25$\pm$1.4&91.20$\pm$0.0&75.86\\
    RSC \cite{huang2020self}&71.62$\pm$0.0&75.11$\pm$0.0&66.62$\pm$0.0&90.88$\pm$0.0&76.05\\
     Ours&\textbf{72.67$\pm$0.5}&\textbf{76.51$\pm$0.3}&\textbf{73.09$\pm$0.2}&\textbf{92.01$\pm$0.3}&\textbf{78.57}\\
    \midrule
    \multicolumn{6}{c}{ResNet-18}\\
    \midrule
    Deep All&77.65$\pm$0.2&75.36$\pm$0.3&69.08$\pm$0.2&95.12$\pm$0.1&79.30\\
    MMLD \cite{matsuura2020domain}&81.28$\pm$0.0&77.16$\pm$0.0&72.29$\pm$0.0&96.09$\pm$0.0&81.83\\
    EISNet \cite{wang2020learning}&81.89$\pm$0.9&76.44$\pm$0.3&74.33$\pm$1.4&95.93$\pm$0.1&82.15\\
    L2A-OT \cite{zhou2020learning} &83.30$\pm$0.0&78.20$\pm$0.0&73.60$\pm$0.0&96.20$\pm$0.0&82.80\\
    DSON \cite{seo2019learning} &84.67$\pm$0.0&77.65$\pm$0.0&\textbf{82.23$\pm$0.0}&95.87$\pm$0.0&85.11\\
    RSC \cite{huang2020self}&83.43$\pm$0.8&80.31$\pm$1.8&80.85$\pm$1.2&95.99$\pm$0.3&85.15\\
    Ours&\textbf{86.39$\pm$0.3}&\textbf{81.26$\pm$0.2}&81.79$\pm$0.1&\textbf{97.15$\pm$0.4}&\textbf{86.65}\\
    \midrule
    \multicolumn{6}{c}{ResNet-50}\\
    \midrule
    Deep All&81.31$\pm$0.3&78.54$\pm$0.4&69.76$\pm$0.4&94.97$\pm$0.1&81.15\\
    MASF \cite{dou2019domain}&82.89$\pm$0.2&80.49$\pm$0.2&72.29$\pm$0.2&95.01$\pm$0.1&82.67\\
    EISNet \cite{wang2020learning}&86.64$\pm$1.4&81.53$\pm$0.6&78.07$\pm$1.4&97.11$\pm$0.4&85.84\\
    DSON \cite{seo2019learning} &87.04$\pm$0.0&80.62$\pm$0.0&82.90$\pm$0.0&95.99$\pm$0.0&86.64\\
    RSC \cite{huang2020self}&87.89$\pm$0.0&82.16$\pm$0.0&83.85$\pm$0.0&97.92$\pm$0.0&87.83\\
    Ours&\textbf{90.25$\pm$0.4}&\textbf{85.19$\pm$0.2}&\textbf{86.20$\pm$0.5}&\textbf{98.97$\pm$0.1}&\textbf{90.15}\\
    
    \bottomrule
  \end{tabular}
  }
  \caption{Comparison of performance between different models on PACS \cite{li2017deeper} dataset with AlexNet, ResNet-18 and ResNet-50 as backbones for the $f_\theta$ network.}
  \label{table:pacs}
\end{table}
\vspace{-0.35cm}
\subsection{Multi-source Domain Generalization}
\vspace{-0.1cm}
\textbf{PACS:} The PACS dataset consists of images from \textbf{P}hoto, \textbf{A}rt Painting, \textbf{C}artoon  and \textbf{S}ketch  domains. We follow the experimental protocol defined in \cite{li2017deeper}. We use ResNet-50, ResNet-18 and AlexNet as backbones for the feature extractor network $f_\theta$ and train them on source domains. To perform target projection, we learn to sample from the features produced by $f_\theta$ by training a VAE on the latent space ($\mathbf{u}$). We achieve state-of-the-art results with all three choices of backbones as shown in Table \ref{table:pacs}. 
\setlength{\textfloatsep}{1\baselineskip plus 0.2\baselineskip minus 0.5\baselineskip}
\begin{table}[hbt!]
\centering
  \scalebox{0.69}{
  \begin{tabular}{lccccc}
    \toprule
        {Method} & {Caltech}
        & {LabelMe}
        & {Pascal} & {Sun} & {Avg.}
        \\
    \midrule
    Deep All&96.45$\pm$0.1&60.03$\pm$0.5&70.41$\pm$0.4&62.63$\pm$0.3&72.38\\
    Jigen \cite{carlucci2019domain}&96.93$\pm$0.0&60.90$\pm$0.0&70.62$\pm$0.0&64.30$\pm$0.0&73.19\\

    MMLD \cite{matsuura2020domain}&96.66$\pm$0.0&58.77$\pm$0.0&71.96$\pm$0.0&68.13$\pm$0.0&73.88\\
    
    MASF \cite{dou2019domain}&94.78$\pm$0.2&64.90$\pm$0.1&69.14$\pm$0.2&67.64$\pm$0.1&74.11\\
    EISNet \cite{wang2020learning}&97.33$\pm$0.4&63.49$\pm$0.8&69.83$\pm$0.5&68.02$\pm$0.8&74.67\\
    RSC \cite{huang2020self}&97.61$\pm$0.0&61.86$\pm$0.0&73.93$\pm$0.0&68.32$\pm$0.0&75.43\\
    Ours&\textbf{98.12$\pm$0.1}&\textbf{66.80$\pm$0.3}&\textbf{74.77$\pm$0.4}&\textbf{70.43$\pm$0.1}&\textbf{77.53}\\
    
    
    \bottomrule
  \end{tabular}
  }
  \caption{Comparison of performance between different models using AlexNet backbone on VLCS \cite{Fang_2013_ICCV} dataset.}
  \label{table:vlcs}
\end{table}

\textbf{VLCS:} VLCS comprises of the \textbf{V}OC2007(Pascal), \textbf{L}abelMe, \textbf{C}altech and \textbf{S}un domains, all of which contain photos. We follow the same experimental setup as mentioned in \cite{matsuura2020domain} where we train on three source domains with 70\% data from each and test on all the examples from the fourth target domain. We utilize similar setup for $f_\theta$ and $G_\phi$ as in the PACS dataset with AlexNet as the backbone.  We achieve SOTA results on VLCS as evident from Table \ref{table:vlcs}. We emphasize that unlike PACS dataset where the domains differ in image styles, VLCS consists of domains that contain only photos. Thus, we demonstrate that our method generalizes well even when the source domains are not diverse. 
\vspace{-0.15cm}
\begin{table}[hbt!]
\centering
  \scalebox{0.69}{
  \begin{tabular}{lccccc}
    \toprule
        {Method} & {Artistic}
        & {Clipart}
        & {Product} & {Real-World} & {Avg.}
        \\
    \midrule
    Deep All&52.06$\pm$0.5&46.12$\pm$0.3&70.45$\pm$0.2&72.45$\pm$0.2&60.27\\
    
    D-SAM \cite{d2018domain}&58.03$\pm$0.0&44.37$\pm$0.0&69.22$\pm$0.0&71.45$\pm$0.0&60.77\\
    Jigen \cite{carlucci2019domain}&53.04$\pm$0.0&47.51$\pm$0.0&71.47$\pm$0.0&72.79$\pm$0.0&61.20\\
   MMD-AAE \cite{li2018domain}&56.50$\pm$0.0&47.30$\pm$0.0&72.10$\pm$0.0&74.80$\pm$0.0&62.70\\
    DSON \cite{seo2019learning} &59.37$\pm$0.0&45.70$\pm$0.0&71.84$\pm$0.0&74.68$\pm$0.0&62.90\\
    RSC \cite{huang2020self}&58.42$\pm$0.0&47.90$\pm$0.0&71.63$\pm$0.0&74.54$\pm$0.0&63.12\\
    
    L2A-OT \cite{zhou2020learning} &60.60$\pm$0.0&50.10$\pm$0.0&74.80$\pm$0.0&77.00$\pm$0.0&65.60\\
    Ours&\textbf{62.63$\pm$0.2}&\textbf{55.79$\pm$0.3}&\textbf{76.86$\pm$0.1}&\textbf{78.98$\pm$0.1}&\textbf{68.56}\\
    
    
    \bottomrule
  \end{tabular}
  }
  \caption{Comparison of performance between different models using ResNet-18 backbone on Office-Home \cite{venkateswara2017deep}.}
  \label{table:officehome}
\end{table}
\setlength{\textfloatsep}{1\baselineskip plus 0.2\baselineskip minus 0.5\baselineskip}
\vspace{-0.5cm}
\begin{table}[hbt!]
\centering
  \scalebox{0.69}{
  \begin{tabular}{lccccc}
    \toprule
        {Method} & {MNIST}
        & {MNIST-M}
        & {SVHN} & {SYN} & {Avg.}
        \\
    \midrule
    Deep All&95.24$\pm$0.1&58.36$\pm$0.6&62.12$\pm$0.5&78.94$\pm$0.3&73.66\\
    Jigen \cite{carlucci2019domain}&96.50$\pm$0.0&61.40$\pm$0.0&63.70$\pm$0.0&74.00$\pm$0.0&73.90\\
    CCSA \cite{motiian2017unified}&95.20$\pm$0.0&58.20$\pm$0.0&65.50$\pm$0.0&79.10$\pm$0.0&74.50\\
    
   MMD-AAE \cite{li2018domain}&96.50$\pm$0.0&58.40$\pm$0.0&65.00$\pm$0.0&78.40$\pm$0.0&74.60\\
    
    
    CrossGrad \cite{shankar2018generalizing}&96.70$\pm$0.0&61.10$\pm$0.0&65.30$\pm$0.0&80.20$\pm$0.0&75.80\\
    L2A-OT \cite{zhou2020learning} &96.70$\pm$0.0&63.90$\pm$0.0&68.60$\pm$0.0&83.20$\pm$0.0&78.10\\
    Ours&\textbf{97.99$\pm$0.1}&\textbf{66.52$\pm$0.4}&\textbf{71.31$\pm$0.3}&\textbf{85.40$\pm$0.5}&\textbf{80.30}\\
    
    
    \bottomrule
  \end{tabular}
  }
  \caption{Comparison of performance between different models on Digits-DG \cite{zhou2020deep} dataset.}
  \label{table:digitsDG}
\end{table}

\vspace{-0.5cm}
\textbf{Office-Home:} Office-Home contains images from 4 domains namely Artistic, Clipart, Product and Real-World. We follow the experimental protocol as outlined in \cite{d2018domain}.  We utilize a ResNet-18 as backbone with two additional fully-connected layers. 
The reconstruction error in VAE is minimized using both L1 and L2 loss functions. We achieve SOTA results on all the domains using a ResNet-18 backbone as shown in Table \ref{table:officehome}. It should be noted that Clipart is a difficult domain to generalize to as it is dissimilar to other domains. We achieve 5.6\% improvement over the nearest competitor on the Clipart domain, showcasing the merit of our method in generalizing to dissimilar target domains.

\textbf{Digits-DG:} Digits-DG is the task of digit recognition using MNIST \cite{lecun1998gradient}, MNIST-M \cite{ganin2015unsupervised}, SVHN \cite{netzer2011reading} and SYN \cite{ganin2015unsupervised} domains that differ drastically in font style and background. We follow the experimental setup of \cite{zhou2020deep} and use their architecture for the feature extractor $f_\theta$. The $G_\phi$ network is implemented in a similar way as it is implemented for the PACS dataset with ResNet-18 as the backbone. Table \ref{table:digitsDG} shows the performance of our method on Digits-DG dataset compared against various SOTA methods.
\setlength{\textfloatsep}{1\baselineskip plus 0.1\baselineskip minus 0.1\baselineskip}

\begin{figure*}[tb!]
\captionsetup{skip=-10pt}
\setlength\belowcaptionskip{-0.3cm}
\centering
\subfloat[][]{\includegraphics[width=0.210\textwidth]{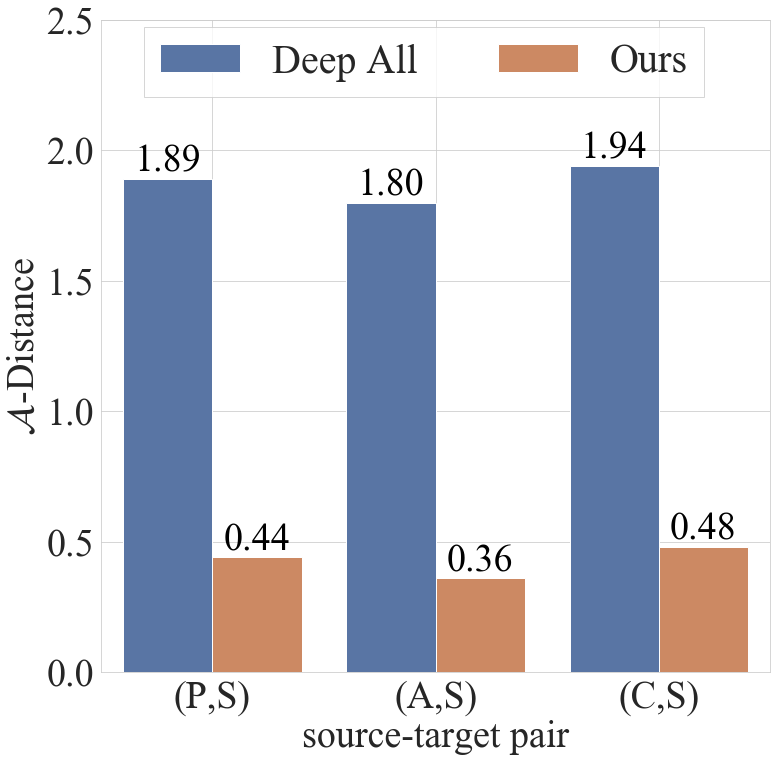}\label{fig:h_div}}
\hspace{0.34cm}
\subfloat[][]{\includegraphics[width=0.210\textwidth]{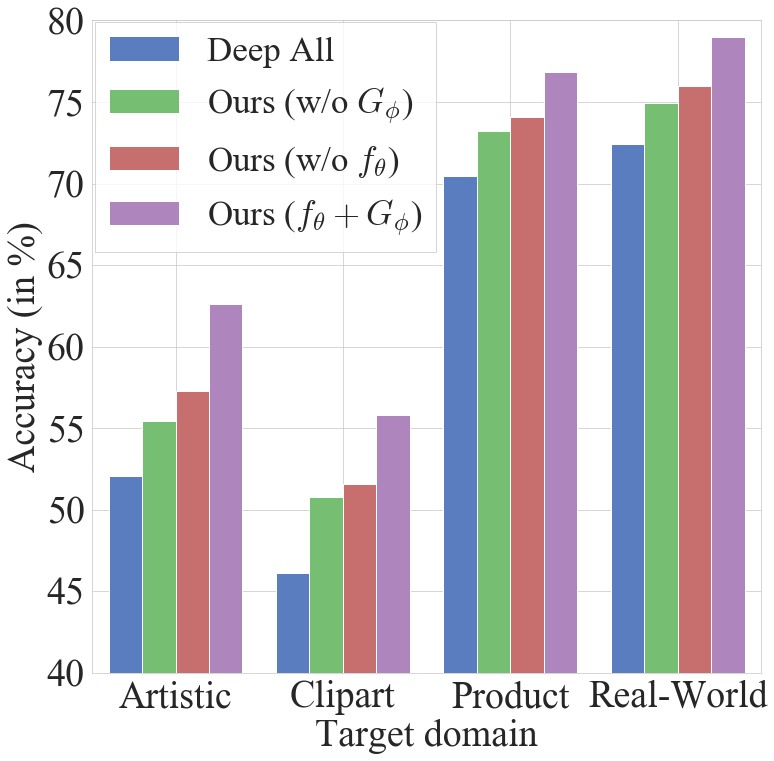}\label{fig:ablation}}
\hspace{0.34cm}
\subfloat[][]{\includegraphics[width=0.210\textwidth]{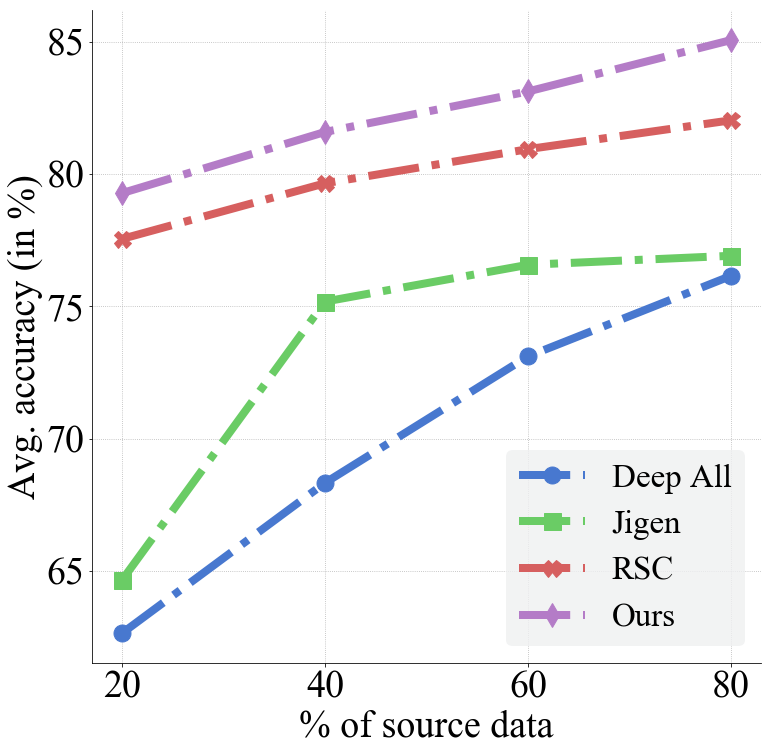}\label{fig:scarce_data}}
\hspace{0.34cm}
\subfloat[][]{\includegraphics[width=0.210\textwidth]{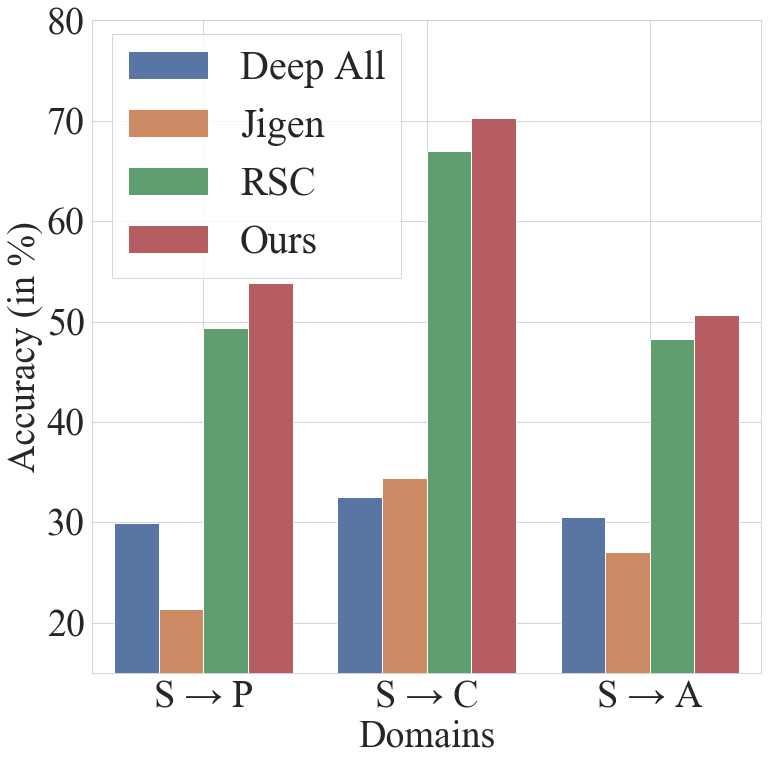}\label{fig:single_source_dg}}
\caption{\textbf{(left to right)} \textbf{(a)} Plot showing $\mathcal{A}$-divergence between source and projected target features. \textbf{(b)} Ablation on Office-Home dataset highlighting the relative importance of different components employed in our method. \textbf{(c)} Relative performance of different methods trained on 20\%, 40\%, 60\% and 80\% of PACS dataset. For each setting, we report average performance with all examples from the target domains using leave-one-out strategy. \textbf{(d)} Relative performance of DG methods trained on a single domain (Sketch (S)) and tested on Photo (P), Cartoon (C) and Art (A) domains.}
\label{fig:single_source_curves}
\end{figure*}
\vspace{-0.27cm}
\subsection{Pairwise $\mathcal{H}$-divergence}
\vspace{-0.23cm}
We examine the effectiveness of the proposed method in projecting target features onto the source-feature manifold by computing a proxy measure for $\mathcal{H}$-divergence called the $\mathcal{A}$-distance \cite{ben2007analysis}, that measures the divergence between two domains. We compute the $\mathcal{A}$-distance between features from the Sketch domain (target domain) and each of the three source domains (Photo, Cartoon and Art), obtained in two ways - first, from a ResNet-18 model trained on the source domains (Deep All), and second, the features obtained after performing target projections using our method. We compare the $\mathcal{A}$-distance obtained from these two methods in Figure \ref{fig:h_div} . It is observed that the source-target divergence is reduced substantially compared to Deep All, suggesting the effectiveness of our projection scheme in bringing the target points onto the source manifold.  
\setlength{\textfloatsep}{1\baselineskip plus 0.1\baselineskip minus 0.1\baselineskip}
\vspace{-0.25cm}
\subsection{Ablation Studies}
\vspace{-0.24cm}
To quantify the importance of the two components (i.e. the networks $f_\theta$ and $G_\phi$), we perform ablative studies using the Office-Home and Digits-DG datasets. For the Office-Home dataset, the Deep All model is trained with ResNet-18 as the backbone while the backbone proposed by \cite{zhou2020deep} is used for the Deep All model on the Digits-DG dataset. We classify target domain features extracted from the feature extractor network without the target projection procedure as shown in Figure \ref{fig:ablation}. We observe a marginal improvement over Deep All by classifying on the domain-invariant features learnt by $f_\theta$ without using the $G_\phi$ network. We attribute this to the explicit class separability imposed by the $f_\theta$ network in the feature space $\mathcal{F}$. When we exclude the $f_\theta$ network by training the generative model on Deep All features and perform the projection procedure on these features, we still observe a sizable performance boost over merely classifying with the Deep All model. This shows the effectiveness of the target projection procedure as a sampling strategy. The best performance is achieved with all components working together as seen in Figure \ref{fig:ablation}.
\setlength{\textfloatsep}{1\baselineskip plus 0.2\baselineskip minus 0.5\baselineskip}
\vspace{-0.3cm}
\subsection{Sampling strategy}
\vspace{-0.25cm}
To quantify the effect of the specific generative model used for the $G_\phi$ network, we report performance with three samplers on VLCS dataset, namely: (a) VAE, (b) 1-Nearest Neighbor (1-NN) of the target sample with the source samples using the similarity metric as in Eq. \ref{addm_sim_loss}, (c) GAN, and show results in Table \ref{table:vlcs_gans}. It is seen that sampling using continuous generative models (VAE and GAN) offer better performance compared to 1-NN sampling.  On domains like LabelMe and Sun, they offer improvement of about 5.7\% and 7.0\% respectively over the 1-NN method of sampling. This is attributed to the fact that generative models can approximate the true data distribution by learning from empirical data and offer infinite sampling while 1-NN search is restricted to the existing training points only.
\setlength{\textfloatsep}{1\baselineskip plus 0.1\baselineskip minus 0.1\baselineskip}
\begin{table}[hbt!]
\centering
  \scalebox{0.69}{
  \begin{tabular}{lccccc}
    \toprule
        {Method} & {Caltech}
        & {LabelMe}
        & {Pascal} & {Sun} & {Avg.}
        \\
    \midrule
    Deep All&96.45&60.03&70.41&62.63&72.38\\
    1-NN&96.51&61.44&71.82&63.46&73.31\\
    Ours ($G_\phi$ = GAN)&97.89&\textbf{67.18}&74.59&70.28&77.48\\
    Ours ($G_\phi$ = VAE)&\textbf{98.12}&66.80&\textbf{74.77}&\textbf{70.43}&\textbf{77.53}\\
    \bottomrule
  \end{tabular}
  }
  \caption{Performance of our method on VLCS dataset with different sampling strategies.}
  \label{table:vlcs_gans}
\end{table}
\vspace{-0.7cm}
\subsection{Low resource settings}
\vspace{-0.15cm}
  

  
In this section, we demonstrate the efficacy of our method in low-resource settings. We show that our method generalizes well when used in scarce resource settings and single domain DG problems, and can also easily be extended to supervised domain adaptation with access to scant labelled target samples. Our method has a two-fold advantage that makes it especially data-efficient: (a) since $f_\theta$ is trained on pairs of images, it can learn effectively even on small datasets (b) the generative model $G_\phi$ enables infinite sampling during the target projection procedure. 

\begin{figure}[tb!]
\centering
\captionsetup{skip=-7pt}
\subfloat[][]{\includegraphics[width=0.22\textwidth]{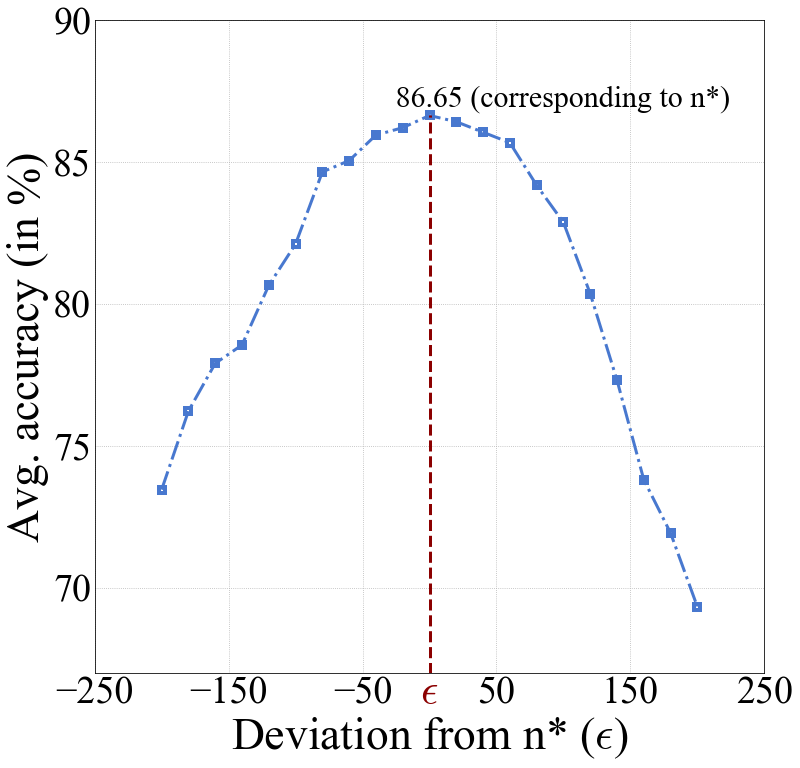}\label{fig:dev_nstar}\label{fig:loss_stop_point}}
\hspace{0.3cm}
\subfloat[][]{\includegraphics[width=0.22\textwidth]{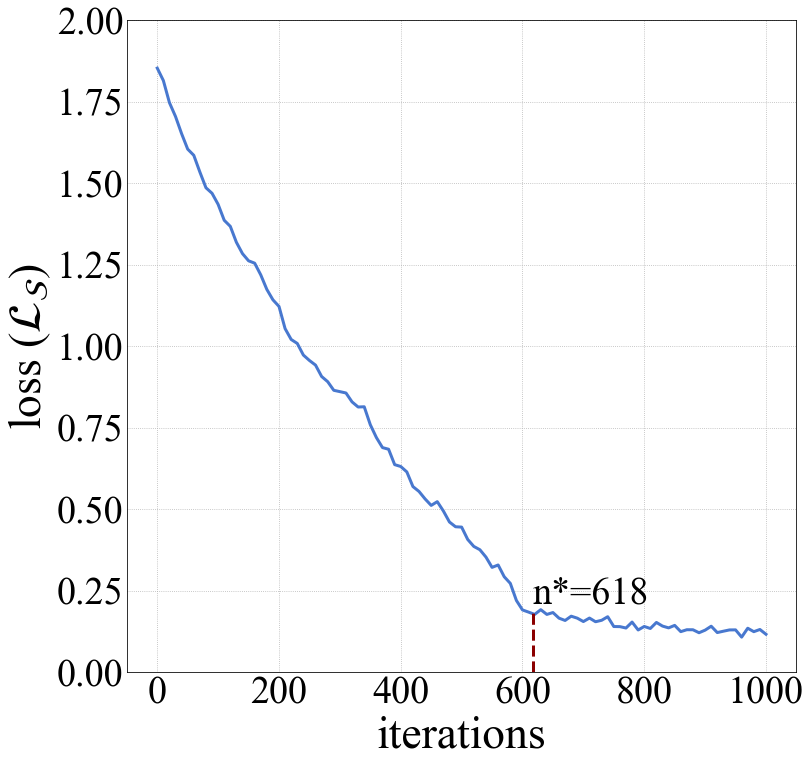}\label{fig:loss_elbow_curve}}
\vspace{-0.1cm}
\caption{\textbf{(left)} \textbf{(a)} Effect of deviation from the optimal number of iterations of each target example during inference on PACS. \textbf{(right)} \textbf{(b)} Plot of iterations vs loss for projection of a target example from Sketch domain on source manifold ($\mathcal{Z}_s$) created by Photo, Art and Cartoon domains.}

\end{figure}
\vspace{-0.48cm}
\subsubsection{Scarce resource setting}
\vspace{-0.32cm}
  We compare our method against Deep All and two state-of-the-art methods, RSC \cite{huang2020self} and Jigen \cite{carlucci2019domain}. For each source domain in PACS we train on \{20\%, 40\%, 60\%, 80\%\} data from each domain and test on the entire target domain. From Figure \ref{fig:scarce_data}, it is evident that our method outperforms all the baselines at every considered resource setting. 
  \vspace{-0.3cm}
 \subsubsection{Supervised Domain Adaptation}
 \vspace{-0.24cm}
  This setting has also been referred to as Few-shot Domain Adaptation in \cite{qiao2020learning} and \cite{motiian2017few}. In this setting, in addition to the source data, we assume that we have access to a limited number of labelled samples ($|\mathcal{T}|$) from the target domain at train time. We train the feature extractor network $f_\theta$, generative model $G_\phi$ and the classifier $\mathcal{C_\psi}$ on the source data and fine tune on the target samples. We compare our method against M-ADA \cite{qiao2020learning} and FADA \cite{motiian2017few} on the Digits dataset. The results are presented in Table \ref{table:digitsDA}. We outperform our nearest competitors by a significant margin, thus highlighting the adaptability of our method to this use-case.

\begin{table}[hbt!]
\centering
  \scalebox{0.69}{
  \begin{tabular}{lccccc}
    \toprule
        {Method} & {$|\mathcal{T}|$}& {U$\rightarrow$M}
        & {M$\rightarrow$S}
        & {S$\rightarrow$M} & {Avg.}
        \\
    \midrule
    
    
    
    FADA \cite{motiian2017few}&7&91.50&47.00&87.20&75.23\\
    CCSA \cite{motiian2017unified}&10&95.71&37.63&94.57&75.97\\
    \midrule
    \multirow{3}{*}{M-ADA \cite{qiao2020learning}}&0&71.19&36.61&60.14&55.98\\
    &7&92.33&56.33&89.90&79.52\\
    &10&93.67&57.16&91.81&80.88\\
    \midrule
    \multirow{3}{*}{Ours}&0&74.52&42.96&64.12&60.53\\
    &7&93.81&58.92&92.02&81.58\\
    &10&\textbf{96.10}&\textbf{60.07}&\textbf{95.33}&\textbf{83.83}\\
    \midrule
    
    
    
  \end{tabular}
  }
  \caption{Comparison of few-shot domain adaptation performance between different models on Digits dataset (MNIST \cite{lecun1998gradient} (M), USPS \cite{hull1994database} (U)    and SVHN \cite{netzer2011reading} (S)).}
  \label{table:digitsDA}
\end{table}
\vspace{-0.35cm}
  \subsubsection{Single Source Domain Generalization} \label{section:single_src_dg}
\vspace{-0.3cm}  
\begin{table}[hbt!]
\centering
  \scalebox{0.69}{
  \begin{tabular}{lccc}
    \toprule
        {Method} & {S$\,\to\,$P}
        & {S$\,\to\,$C}
        & {S$\,\to\,$A} 
        \\
        \midrule
        Deep All&29.88&32.47&30.56\\
        Ours (w/o $G_\phi$)&33.76&37.94&36.02\\
        Ours (w/o $f_\theta$)&50.39&66.82&44.94\\
        Ours ($f_\theta + G_\phi$)&53.82&70.33&	50.61\\
    
    \bottomrule
  \end{tabular}
  }
  \caption{Ablation on Single Source DG with Sketch (S) as source and Photo (P), Cartoon (C) and Art (A) as unseen target domains.}
  \label{table:single_source_ablation}
\end{table}

In single source DG, we only have access to a single domain during training and aim to generalize to all other unseen domains. We train on the Sketch domain of the PACS dataset and test on the other three domains (i.e. Photo, Art Painting and Cartoon). We compare our method against Jigen \cite{carlucci2019domain} and RSC \cite{huang2020self}. The results are presented in Figure \ref{fig:single_source_dg}. We also examine the individual effects of each of the components $f_\theta$ and $G_\phi$ of our method in this scenario by examining the performance without each of them (Table \ref{table:single_source_ablation}). We observe that without $G_\phi$, the performance difference between Deep All and our method is substantially lower than the improvement obtained by performing target projections on Deep All features, highlighting the effectiveness of the projection procedure. The best performance is obtained when both are used together, since the projection procedure effectively utilizes the label-preserving metric defined by $f_\theta$. 
\vspace{-0.16cm}
\subsection{Robust Domain Generalization}
\vspace{-0.15cm}
We examine the robustness of our method against different types of corruptions. We benchmark against the CIFAR-10-C dataset \cite{hendrycks2018benchmarking}, which consists of images with 19 corruptions types applied at five levels of severities on the test set of CIFAR-10. We follow the protocol detailed in \cite{qiao2020learning} and train our model on the CIFAR-10 dataset using a Wide Residual Network (WRN) backbone \cite{BMVC2016_87}. The results are presented severity level-wise in Table \ref{table:robust_cifar10C} and by type of corruption in Figure \ref{fig:robust_dg_cifar}. Similar to \ref{section:single_src_dg}, we observe the effectiveness of $G_\phi$ in generalizing to corruptions. 


\begin{table}[hbt!]
\centering
  \scalebox{0.69}{
  \begin{tabular}{lccccc}
    \toprule
        {Method} & {Level 1}
        & {Level 2}
        & {Level 3} & {Level 4} & {Level 5}
        \\
    \midrule
    ERM \cite{koltchinskii2011oracle}&87.8$\pm$0.1&81.5$\pm$0.2&75.5$\pm$0.4&68.2$\pm$0.6&56.1$\pm$0.8\\
    M-ADA \cite{qiao2020learning}&90.5$\pm$0.3&86.8$\pm$0.4&82.5$\pm$0.6&76.4$\pm$0.9&65.6$\pm$1.2\\
    Ours&\textbf{93.6$\pm$0.2}&\textbf{89.2$\pm$0.4}&\textbf{85.3$\pm$0.1}&\textbf{79.0$\pm$0.3}&\textbf{68.2$\pm$0.6}\\
    \bottomrule
  \end{tabular}
  }
  \caption{Comparison of performance on CIFAR-10-C with 5 different levels of severity. Accuracy averaged over all 19 corruption levels}
  \label{table:robust_cifar10C}
\end{table}

\begin{figure}[tb!]
\captionsetup{skip=-8pt}
\centering
\subfloat[][]{\includegraphics[height=3.85cm]{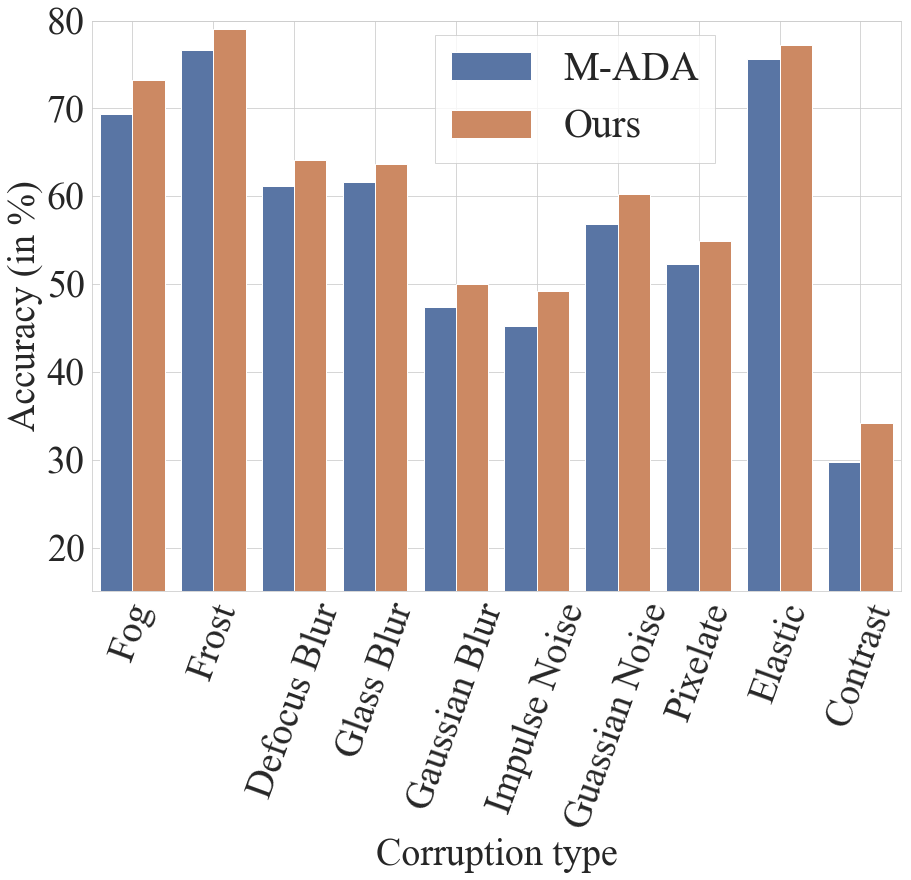} \label{fig:robust_dg_cifar}} 
\hspace{0.0cm}
\subfloat[][]{\includegraphics[height=3.85cm]{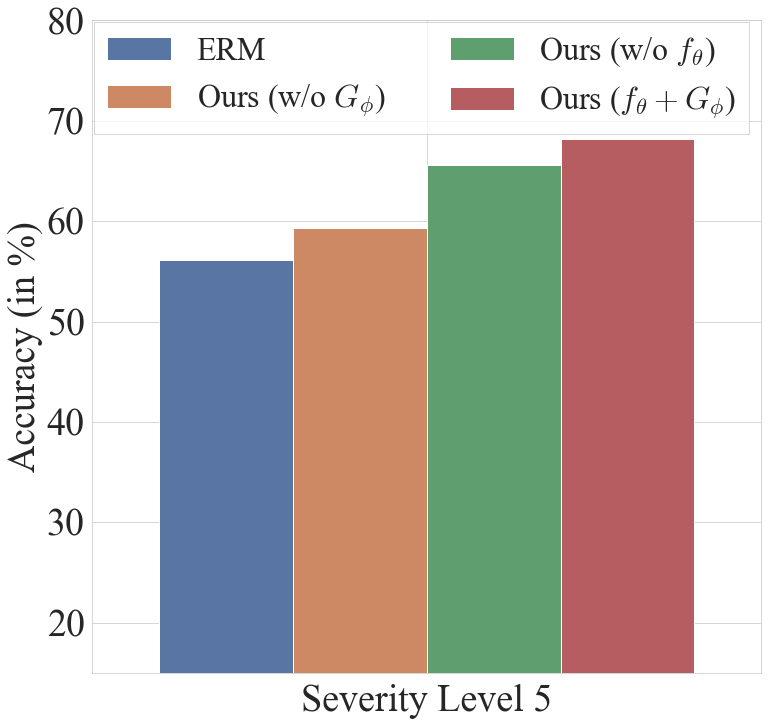}}
\caption{\textbf{(left)} \textbf{(a)} Performance comparison between M-ADA and our method on 10 out of the 19 corruption types at severity level 5 from the CIFAR-10-C dataset. \textbf{(right)} \textbf{(b)} Plot showing relative importance of each component of our method on the CIFAR-10-C dataset. Accuracy averaged over all 19 corruption levels.}

\end{figure}
\vspace{-0.2cm}
\subsection{Determining the stopping criteria} \label{section:stopping_criteria}
\vspace{-0.2cm}
The inference-time iterative optimization process discussed in \ref{section:analysis_and_implementation} needs a stopping criteria, since stopping too early would not guarantee the label preservation, while stopping too late may take the projected target to a low-probability region on the source manifold. 

We address this issue heuristically: stop the iteration process at the ``elbow-point" (maxima of the second derivative)  of the loss curve as a function of the number of iterations (denoted by $\text{n}^\ast$). This choice is inspired by the observation that the elbow-point reflects the point of diminishing returns; for a constant iteration rate ($\beta$), the loss decreases at a slower rate beyond this point. 

As empirical evidence, we vary the stopping point for each target sample by a fixed number of iterations $\epsilon$ around the maxima $\text{n}^\ast$ of the second derivative values of the sample and calculate the accuracy on the target projections so obtained. The plot of accuracy vs $\epsilon$ is shown in Figure \ref{fig:dev_nstar}. We observe that the highest accuracy is obtained around $\epsilon=0$ which indicates the correctness of the stopping criteria $\text{n}^\ast$. For negative values of $\epsilon$, the accuracy drop can be explained by inadequate label preservation, while the projected target leaves the source manifold for positive values of $\epsilon$.
\setlength{\textfloatsep}{1\baselineskip plus 0.2\baselineskip minus 0.5\baselineskip}
\vspace{-0.45cm}
\section{Conclusion}
\vspace{-0.24cm}

We propose a novel Domain Generalization technique where the source domains are utilized to learn a domain invariant label-preserving metric space. During inference, every target sample is projected onto this space so that the classifier trained on the source features can generalize well on the projected target sample. We have demonstrated that this method yields SOTA results on Multi Source, Single Source and Robust Domain Generalization settings. In addition, the data-efficiency of the method makes it suitable to work well in Low Resource settings. Future iterations of work could attempt to extend this method to Domain Generalization for Segmentation and Zero-Shot Learning. 

\appendix

\section{Theory}



\begin{prop}
\label{prop_delta}
The label-preserving transformation $f$ defined in Eq. 1 of the main paper reduces the $\mathcal{H}$-divergence between any pair of domains on which it is learned. Stated explicitly:

Assume the case of Binary Classification. We assume that class prior probabilities for each class is equal in all the source domains, i.e.
    \begin{equation}
        \mathbf{P}(\mathcal{Y}_i = \ell) = \mathbf{P}(\mathcal{Y}_k = \ell) \quad \forall i, k \in \{1, ..., |S|\}, \forall \ell \in \{0, 1\}.
     \end{equation}
     where $\mathbf{P}(\mathcal{Y}_i = {\ell}) = \mathop{\mathbb{E}}_{x \in \mathcal{D}_i^S} [\mathbbm{1}_{g(x)=\ell}]$ denotes the proportion of examples having label $\ell$ in source domain $\mathcal{D}^S_i$.
If there exists a metric space denoted by $(\mathcal{F},d)$ and a transformation function $f : \mathcal{X} \to \mathcal{F}$ such that $d(f(x_1), f(x_2)) = 0  \iff g(x_1)=g(x_2)$, i.e. $x_1$ and $x_2$ have the same labels (irrespective of domain), then,
 \begin{equation}
 \Delta_{\mathcal{H'}}[f(\mathcal{D}_i^S), f(\mathcal{D}_k^S)] = 0  \quad \forall i, k \in \{1, ..., |S|\}
\end{equation} where $\mathcal{H'}$ denotes the space of hypotheses $h' : \mathcal{F} \to \mathcal{Y}$, and $f(\mathcal{D})$ denotes the distribution $\mathcal{D}$ under the transformation $f$.

\end{prop}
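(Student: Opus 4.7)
The plan is to exploit the collapsing effect that the transformation $f$ has on the feature space: the hypothesis $d(f(x_1),f(x_2))=0 \iff g(x_1)=g(x_2)$ forces all inputs sharing a label to map to a single point in $\mathcal{F}$, so the pushforward measures become discrete with at most two atoms (one per class). From there, matching class priors across domains makes the pushforwards identical, and equal distributions trivially have zero $\mathcal{H}'$-divergence.

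\textbf{Step 1 (Collapse argument).} First I would fix any two inputs $x_1,x_2 \in \mathcal{X}$ (from any source domains) with $g(x_1)=g(x_2)$. By the assumed biconditional, $d(f(x_1),f(x_2))=0$, and since $d$ is a metric this forces $f(x_1)=f(x_2)$. Conversely, $f(x_1)=f(x_2)$ implies $d(f(x_1),f(x_2))=0$ and hence $g(x_1)=g(x_2)$. Thus under $f$ there exist two distinguished points $z_0,z_1 \in \mathcal{F}$ such that $f(x)=z_\ell$ iff $g(x)=\ell$, independent of the source domain.

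\textbf{Step 2 (Pushforward identification).} Next I would compute the pushforward $f(\mathcal{D}_i^S)$ explicitly. For any measurable $A \subseteq \mathcal{F}$,
\begin{equation*}
f(\mathcal{D}_i^S)(A) = \mathbf{P}_{x\sim\mathcal{D}_i^S}[f(x)\in A] = \sum_{\ell\in\{0,1\}} \mathbbm{1}_{z_\ell \in A}\,\mathbf{P}(\mathcal{Y}_i=\ell).
\end{equation*}
So $f(\mathcal{D}_i^S) = \mathbf{P}(\mathcal{Y}_i=0)\,\delta_{z_0} + \mathbf{P}(\mathcal{Y}_i=1)\,\delta_{z_1}$, and similarly for $f(\mathcal{D}_k^S)$. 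Invoking the equal-prior assumption $\mathbf{P}(\mathcal{Y}_i=\ell)=\mathbf{P}(\mathcal{Y}_k=\ell)$ for each $\ell$, the two pushforward distributions coincide as measures on $\mathcal{F}$.

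\textbf{Step 3 (Divergence is zero).} Finally I would apply the definition of the $\mathcal{H}$-divergence, namely
\begin{equation*}
\Delta_{\mathcal{H}'}[f(\mathcal{D}_i^S), f(\mathcal{D}_k^S)] = 2\sup_{h'\in\mathcal{H}'} \bigl|\mathbf{P}_{z\sim f(\mathcal{D}_i^S)}[h'(z)=1] - \mathbf{P}_{z\sim f(\mathcal{D}_k^S)}[h'(z)=1]\bigr|.
\end{equation*}
Since the two pushforward distributions are identical from Step 2, every $h'\in\mathcal{H}'$ yields identical acceptance probabilities, so the supremum is $0$, completing the proof.

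\textbf{Main obstacle.} The non-routine step is Step 1: turning the vanishing-distance condition into a set-theoretic collapse and making sure we get exactly two atoms in the binary case, with domain-independent locations. Everything downstream is bookkeeping with the class priors and the definition of $\mathcal{H}$-divergence. One subtlety worth flagging is measurability of $\{x: f(x)=z_\ell\}$; because these sets equal $g^{-1}(\ell)$ and we have implicitly assumed $g$ is measurable (it is the labeling function used throughout), the pushforward computation in Step 2 is justified.
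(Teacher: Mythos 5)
Your proof is correct, and it rests on the same key observation as the paper's: the biconditional $d(f(x_1),f(x_2))=0 \iff g(x_1)=g(x_2)$ plus the metric axiom $d(a,b)=0 \Rightarrow a=b$ collapses each class to a single point of $\mathcal{F}$, after which the equal-prior assumption does the rest. The difference is in how the conclusion is extracted. The paper fixes an arbitrary $h'\in\mathcal{H}'$ and enumerates the three possible behaviors of $h'$ on the two collapsed atoms (constant $1$, constant $0$, or agreeing with one class label), checking in each case that $\left|\mathbf{P}_{x\in\mathcal{D}_i^S}[h'(f(x))=1]-\mathbf{P}_{x\in\mathcal{D}_k^S}[h'(f(x))=1]\right|=0$ before taking the supremum. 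You instead show the stronger statement that the pushforward measures themselves coincide, $f(\mathcal{D}_i^S)=\mathbf{P}(\mathcal{Y}_i=0)\,\delta_{z_0}+\mathbf{P}(\mathcal{Y}_i=1)\,\delta_{z_1}=f(\mathcal{D}_k^S)$, from which the vanishing of the $\mathcal{H}'$-divergence (indeed of any distributional divergence) is immediate without any case analysis. Your route is cleaner and more general; the paper's case enumeration makes the role of the equal-prior assumption more visible (it is needed only in the third case, where $h'$ separates the classes). The factor of $2$ you carry in the definition of $\Delta_{\mathcal{H}'}$ versus the paper's omission of it is immaterial since the quantity is zero either way, and your measurability remark about $\{x: f(x)=z_\ell\}=g^{-1}(\ell)$ addresses a point the paper leaves implicit.
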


\begin{proof}
For any $i,k\in[|S|]$ and any $h' \in \mathcal{H'}$,
\begin{align}
    \left| \mathbf{P}_{x \in \mathcal{D}_i^S}[h'(f(x)) = 1]- \mathbf{P}_{x \in \mathcal{D}_k^S}[h'(f(x)) = 1]\right|
\end{align}
\begin{equation}
     =\left| \mathop{\mathbb{E}}_{x \in \mathcal{D}_i^S} [\mathbbm{1}_{h'(f(x)) = 1}] - \mathop{\mathbb{E}}_{x \in \mathcal{D}_k^S} [\mathbbm{1}_{h'(f(x)) = 1}] \right|
\end{equation}
\newline
\newline
Now, note that $g(x_1)=g(x_2)\implies d(f(x_1), f(x_2)) = 0 \implies f(x_1)=f(x_2)$
 $\implies h'(f(x_1))=h'(f(x_2))$. 
\newline 
This means that all the examples of a given class are assigned the same label $\ell$ by $h'$, regardless of the source domain. Depending on the labels assigned to the different classes by $h'$, these three cases would arise:
\newline
\newline
\textbf{Case 1:} $h'(f(x)) = 1 \quad \forall x \in \mathcal{X}$
\begin{equation*}
    \implies \mathop{\mathbb{E}}_{x \in \mathcal{D}_i^S} [\mathbbm{1}_{h'(f(x)) = 1}] = \int_{\mathcal{X}} \mathcal{D}_i^S(x) dx = 1 
\end{equation*}
\begin{equation}
    = \int_{\mathcal{X}} \mathcal{D}_k^S(x) dx = \mathop{\mathbb{E}}_{x \in \mathcal{D}_k^S} [\mathbbm{1}_{h'(f(x)) = 1}]
\end{equation}
\begin{equation*}
    \therefore \left|\mathbf{P}_{x \in \mathcal{D}_i^S}[h'(f(x)) = 1]- \mathbf{P}_{x \in \mathcal{D}_k^S}[h'(f(x)) = 1]\right| = 0    
\end{equation*}
\newline
\textbf{Case 2:} $h'(f(x)) = 0 \quad \forall x \in \mathcal{X}$
\begin{equation}
    \mathop{\mathbb{E}}_{x \in \mathcal{D}_i^S} [\mathbbm{1}_{h'(f(x)) = 1}] = \mathop{\mathbb{E}}_{x \in \mathcal{D}_j^S}[\mathbbm{1}_{h'(f(x)) = 1}] = 0    
    \end{equation}
\begin{equation*}
    \therefore \left|\mathbf{P}_{x \in \mathcal{D}_i^S}[h'(f(x)) = 1]- \mathbf{P}_{x \in \mathcal{D}_k^S}[h'(f(x)) = 1]\right| = 0
\end{equation*}
\newline
\textbf{Case 3:} $h'(f(x)) = 1 \iff  g(x)=\ell$ for some $\ell \in \{0,1\}$
\begin{equation}
    \mathop{\mathbb{E}}_{x \in \mathcal{D}_i^S} [\mathbbm{1}_{h'(f(x)) = 1}] = \mathop{\mathbb{E}}_{x \in \mathcal{D}_i^S} [\mathbbm{1}_{g(x)=\ell}]  = \mathbf{P}(\mathcal{Y}_i = {\ell})    
\end{equation}

Similarly, 
\begin{equation}
\mathop{\mathbb{E}}_{x \in \mathcal{D}_k^S} [\mathbbm{1}_{h'(f(x)) = 1}] = \mathbf{P}(\mathcal{Y}_k = {\ell})
\end{equation}
\begin{multline}
    \therefore\left|\mathbf{P}_{x \in \mathcal{D}_i^S}[h'(f(x)) = 1]- \mathbf{P}_{x \in \mathcal{D}_k^S}[h'(f(x)) = 1]\right| \\
 = \left| \mathbf{P}(\mathcal{Y}_i ={\ell}) - \mathbf{P}(\mathcal{Y}_k = {\ell}) \right| 
 = 0  
\end{multline}

Since, 
\begin{equation}
\left|\mathbf{P}_{x \in \mathcal{D}_i^S}[h'(f(x)) = 1]- \mathbf{P}_{x \in \mathcal{D}_k^S}[h'(f(x)) = 1]\right|=0 \ \forall h' \in \mathcal{H'}
\end{equation}
we have,
\begin{multline}
\Delta_\mathcal{H'}[f(\mathcal{D}_i^S), f(\mathcal{D}_k^S) ] = \\\sup_{h' \in \mathcal{H'}}\left|\mathbf{P}_{x \in \mathcal{D}_i^S}[h'(f(x)) = 1]- \mathbf{P}_{x \in \mathcal{D}_k^S}[h'(f(x)) = 1]\right| = 0 \\ \forall i, k \in \{1, ..., |S|\}
\end{multline}
\newline
\end{proof}

We note here that in practice, the assumption $g(x_1)=g(x_2) \implies d(f(x_1), f(x_2)) = 0$ need not necessarily hold, i.e. two different images having the same label might not have coincident representations. However, the optimization procedure detailed in Eq. 1 of the main paper forces representations of the same class together into the same cluster, as described in Section \ref{section:class_alignment}, thus resulting in a higher similarity score for images with the same label.

\begin{prop}
The expected misclassification rate obtained with a classifier $h'$ when the projected target $\mathbf{z_t^\ast}$ is used instead of the true target $\mathbf{z_t}$, obeys the following upper-bound:
\begin{align}
&\mathbb{E}_{(\mathcal{D}^T, \mathcal{D}^{T^\ast})} \left| \tilde{g}(\mathbf{z_t}) - h'(\mathbf{z_t^*}) \right| \leq \nonumber \\
    & \underbrace{\mathbb{E}_{\mathcal{D}^{T^\ast}} \left| \tilde{g}(\mathbf{z_t^*}) - h'(\mathbf{z_t^*}) \right|}_\text{i} + \underbrace{\mathbb{E}_{(\mathcal{D}^T, \mathcal{D}^{T^\ast})} \left| \tilde{g}(\mathbf{z_t}) - \tilde{g}(\mathbf{z_t^*}) \right|}_\text{ii}
    \label{eqn:risk_ineq}
\end{align}
where $\mathcal{D}^T$ and $\mathcal{D}^{T^\ast}$ respectively denote the true and the projected target distributions, respectively. 
\end{prop}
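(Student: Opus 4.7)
The plan is to derive the stated bound by a pointwise triangle-inequality argument, then pass to expectations. The key algebraic idea is to insert the quantity $\tilde{g}(\mathbf{z_t^\ast})$ as a "bridge" between the ground-truth label of the true target, $\tilde{g}(\mathbf{z_t})$, and the classifier's prediction on the projected target, $h'(\mathbf{z_t^\ast})$. This is natural because $\tilde{g}(\mathbf{z_t^\ast})$ shares the domain/argument $\mathbf{z_t^\ast}$ with the classifier output (linking it to term (i)) and shares the label function $\tilde{g}$ with $\tilde{g}(\mathbf{z_t})$ (linking it to term (ii)).

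First, I would fix an arbitrary pair $(\mathbf{z_t}, \mathbf{z_t^\ast})$ in the support of the joint distribution $(\mathcal{D}^T, \mathcal{D}^{T^\ast})$ and write
\begin{equation*}
\bigl| \tilde{g}(\mathbf{z_t}) - h'(\mathbf{z_t^\ast}) \bigr| = \bigl| \tilde{g}(\mathbf{z_t}) - \tilde{g}(\mathbf{z_t^\ast}) + \tilde{g}(\mathbf{z_t^\ast}) - h'(\mathbf{z_t^\ast}) \bigr|.
\end{equation*}
A single application of the triangle inequality on the real line yields the pointwise bound
\begin{equation*}
\bigl| \tilde{g}(\mathbf{z_t}) - h'(\mathbf{z_t^\ast}) \bigr| \leq \bigl| \tilde{g}(\mathbf{z_t}) - \tilde{g}(\mathbf{z_t^\ast}) \bigr| + \bigl| \tilde{g}(\mathbf{z_t^\ast}) - h'(\mathbf{z_t^\ast}) \bigr|.
\end{equation*}

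Next, I would take the expectation of both sides over the joint distribution $(\mathcal{D}^T, \mathcal{D}^{T^\ast})$. Linearity and monotonicity of expectation preserve the inequality and split the right-hand side into two separate expectations. The second resulting summand is exactly term (ii) of the claim. The first summand, $\mathbb{E}_{(\mathcal{D}^T, \mathcal{D}^{T^\ast})} | \tilde{g}(\mathbf{z_t^\ast}) - h'(\mathbf{z_t^\ast}) |$, has an integrand that depends only on $\mathbf{z_t^\ast}$; by Fubini (or equivalently the law of total expectation), integrating out the $\mathbf{z_t}$ variable against its conditional distribution collapses this to the marginal expectation $\mathbb{E}_{\mathcal{D}^{T^\ast}} | \tilde{g}(\mathbf{z_t^\ast}) - h'(\mathbf{z_t^\ast}) |$, which is precisely term (i).

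Since the derivation rests only on the scalar triangle inequality and marginalization, there is no substantive obstacle; the proof is a three-line calculation. The only point that warrants explicit mention is the marginalization step that converts the joint expectation into the single-variable expectation appearing in term (i), as this makes the statement of the bound match its form in the paper.
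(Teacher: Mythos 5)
Your proposal is correct and follows essentially the same route as the paper's proof: insert $\tilde{g}(\mathbf{z_t^\ast})$ as the bridging term, apply the scalar triangle inequality pointwise, take expectations over the joint distribution, and observe that the first summand depends only on $\mathbf{z_t^\ast}$ so the expectation reduces to the marginal over $\mathcal{D}^{T^\ast}$. Your treatment of the marginalization step is, if anything, slightly more explicit than the paper's.
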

\begin{proof}

Using the triangle inequality on $\mathbb{R}$, we get
\begin{align}
\left| \tilde{g}(\mathbf{z_t}) - h'(\mathbf{z_t^*}) \right| \leq \left|\tilde{g}(\mathbf{z_t^*}) - h'(\mathbf{z_t^*}) \right| +\left| \tilde{g}(\mathbf{z_t}) - \tilde{g}(\mathbf{z_t^*}) \right|
    \label{eqn:risk_ineq1}
\end{align}
$\mathbf{z_t}$ and $\mathbf{z_t^\ast}$ are random variables denoting feature vectors corresponding to the target point and the point obtained by projecting the target point to the source manifold. We assume that these come from the true 
($\mathcal{D}^T$)
and projected ($\mathcal{D}^{T^\ast}$) target distributions, respectively. 
To obtain the average misclassification rate, we take an expectation with respect to their joint distribution.

Since expectation is a linear operation, we can take expectation over both sides of the inequality. Doing so with respect to the joint distribution of $(\mathcal{D}^T, \mathcal{D}^{T^\ast})$ 
on both sides,
\begin{align}
&\mathbb{E}_{(\mathcal{D}^T, \mathcal{D}^{T^\ast})} \left| \tilde{g}(\mathbf{z_t}) - h'(\mathbf{z_t^*}) \right| \leq \nonumber \\
    & {\mathbb{E}_{\mathcal{D}^{T^\ast}} \left| \tilde{g}(\mathbf{z_t^*}) - h'(\mathbf{z_t^*}) \right|} + {\mathbb{E}_{(\mathcal{D}^T, \mathcal{D}^{T^\ast})} \left| \tilde{g}(\mathbf{z_t}) - \tilde{g}(\mathbf{z_t^*}) \right|}
    \label{eqn:risk_ineq}
\end{align}
Note that in term (i) of the inequality, we can omit the expectation over $\mathcal{D}^T$ since there is no dependence on the random variable $\mathbf{z_t}$.

\end{proof}

\begin{figure}
\includegraphics[width=.44\textwidth,height=.47\textwidth]{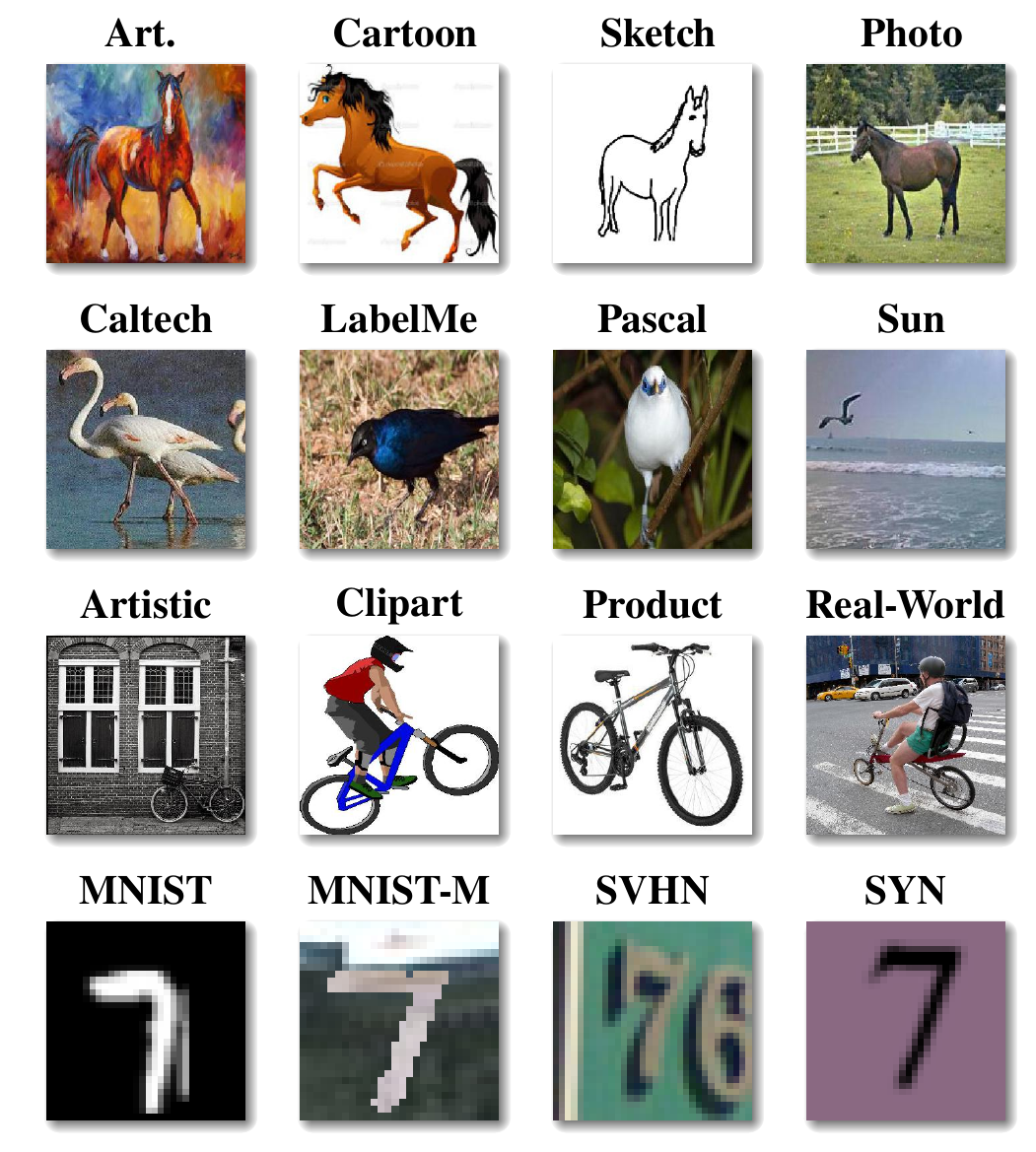}
\caption{Few example images from PACS (1st row), VLCS (2nd row), Office-Home (3rd row) and Digits-DG (4th row) datasets.}
\label{fig:dg_samples}
\end{figure}

\begin{figure}
\includegraphics[width=.475\textwidth,height=.45\textwidth]{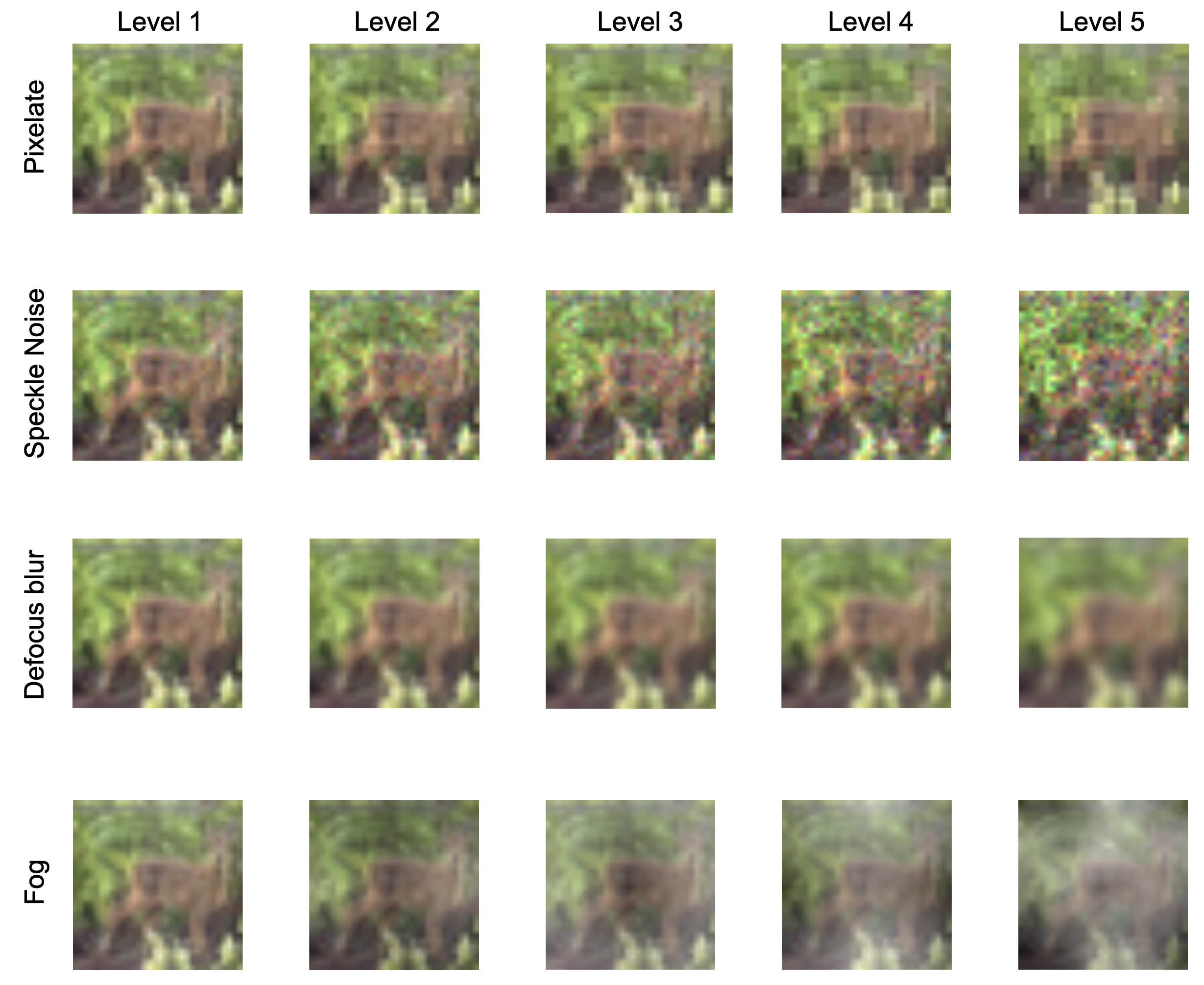}
\caption{Few example images from CIFAR-10-C dataset depicting 5 levels of severities and four of the 19 Corruption types.}
\label{fig:robust_dg_samples}
\end{figure}

   
   

\section{Datasets}
We demonstrate the effectiveness of our algorithm on the following DG datasets:

\textbf{PACS:} Stands for \textbf{P}hoto, \textbf{A}rt Painting, \textbf{C}artoon and \textbf{S}ketch. This dataset contains a total of 9991 images taken from different sources such as Caltech256, Sketchy, TU-Berlin and Google Images. Each image is assigned one out of seven possible labels namely dog, elephant, giraffe, guitar, horse, house or person. The substantial domain shift in the dataset poses a significant challenge to DG methods. 

\textbf{VLCS:} Stands for \textbf{V}OC2007(Pascal), \textbf{L}abelMe, \textbf{C}altech and \textbf{S}un where each one of them is a different dataset differing in the camera specifications. There are a total of 10729 photos in the whole dataset where each photo is assigned one out of five labels namely bird, car, chair, dog, or person. 

\textbf{Office-Home:} This dataset is comprised of four domains namely Art, Clipart, Product and Real-World. There are a total of 15588 images in the dataset and each image is assigned one out of 65 classes. The Real-World images are taken with a regular camera and the Product images are taken from a vendor websites and thus differ in background and quality.

\textbf{Digits-DG:} This digit recognition (0-9) dataset contains a mixture of 4 different domains namely MNIST, MNIST-M, SVHN, and SYN that differ drastically in font style and background. The dataset contains 24000 images.

\textbf{CIFAR-10-C:} This dataset is a robustness benchmark for deep learning systems. It consists of test images of CIFAR-10 with 19 different types of corruption having 5 different severity levels (1-5) with 5 indicating most severe. The models are trained on CIFAR-10 and evaluated on CIFAR-10-C. Please refer to Figure \ref{fig:robust_dg_samples} for a few samples from different severity levels.

\begin{figure*}[hbt!]
\centering
\subfloat[][Photo as source]{\includegraphics[width=0.250\textwidth]{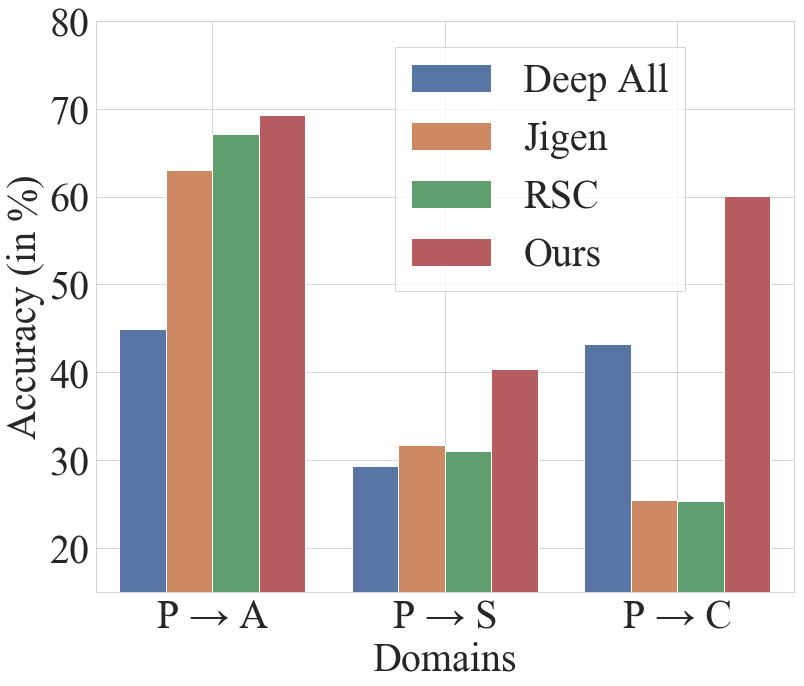}\label{fig:h_div}}
\hspace{0.94cm}
\subfloat[][Cartoon as source]{\includegraphics[width=0.250\textwidth]{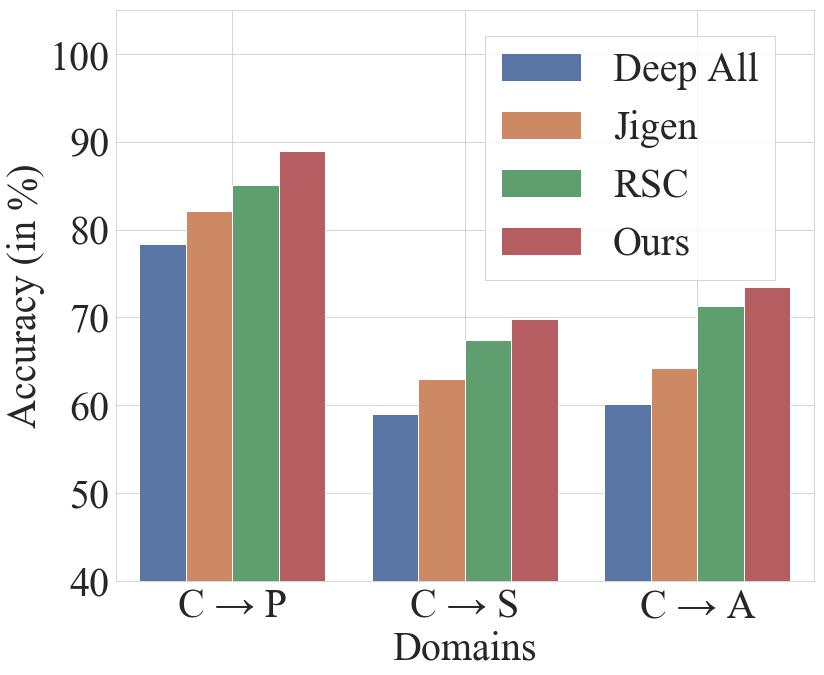}\label{fig:ablation}}
\hspace{0.94cm}
\subfloat[][Art as source]{\includegraphics[width=0.250\textwidth]{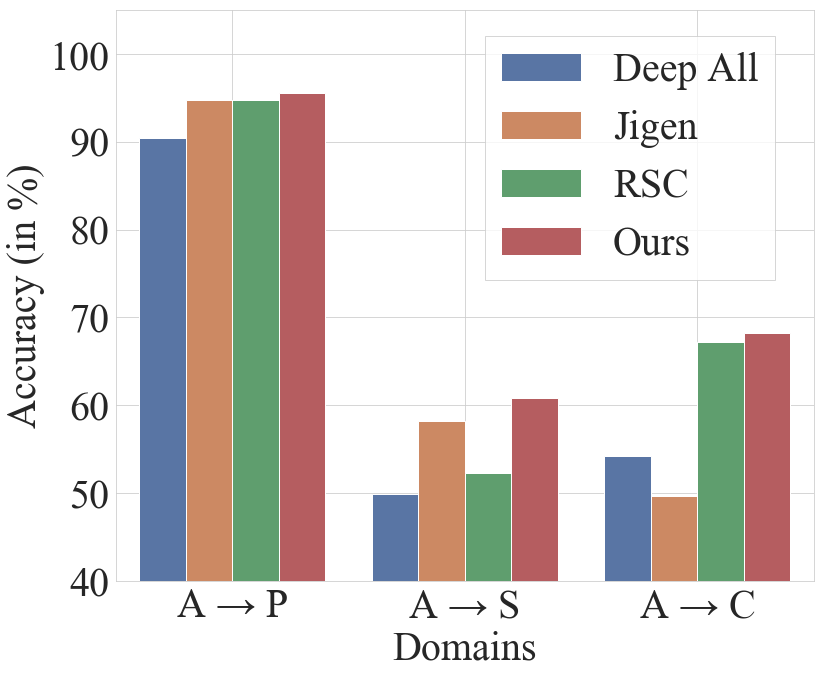}}
\caption{
Relative performance of DG methods trained on a single domain of PACS dataset.}
\label{fig:scarce_data}
\end{figure*}

\setlength{\textfloatsep}{1\baselineskip plus 0.2\baselineskip minus 0.5\baselineskip}
\setlength{\textfloatsep}{1\baselineskip plus 0.2\baselineskip minus 0.5\baselineskip}


\setlength{\textfloatsep}{1\baselineskip plus 0.2\baselineskip minus 0.5\baselineskip}
\section{Implementation Details}
\subsection{System Configuration and Frameworks}
All the experiments are performed using an Intel Xeon processor (12 Cores) with a base frequency of 2.7 GHz, 32GB RAM, Ubuntu OS and four NVIDIA® Tesla® V100 (16 GB Memory) GPUs. Our implementation was written in Python 3.6.10 and uses PyTorch version 1.6.0 running on CUDA version 10.1. 
   
\setlength{\textfloatsep}{1\baselineskip plus 0.2\baselineskip minus 0.5\baselineskip}
\subsection{Architectural details}
Please refer to Figure \ref{fig:architectures} for different architectures and backbones that we employed to evaluate the performance of our method on each dataset.

We have compared the number of parameters in other state-of-the-art (SoTA) models with ResNet-18 as the backbone: RSC - 11.18M, Jigen - 11.19M, EisNet - 23.5M, MMLD - 12.75M, DDAIG - 12.18M. For the proposed method, the number of parameters are 11.32M. This shows that it is comparable to the SoTA, since the classifier and the generator networks used are shallow. 
\subsection{Hyperparameter choices}\label{subsec:hparams}
We select hyperparameters based on model performance on a validation set consisting of data from the source domains. We use the splits given with the dataset whenever they are available. When the validation split for the source domains are not explicitly provided, we split our training data into a small validation set for model selection. We choose the best model based on (a) the average loss $\mathcal{L_A}$ (for $f_\theta$) (b) the average accuracy (for $\mathcal{C_\psi}$) (c) the average reconstruction/discriminator loss (for $G_\phi$) on the validation set.

\paragraph{Training the network $f_\theta$:}
We use the SGD optimizer with a learning rate of 0.001 to train the $f_\theta$ network  for all the datasets except for Digits-DG and CIFAR-10-C datasets. Digits-DG dataset is trained with a learning rate of 0.05 while for CIFAR-10-C, we train the network with an initial learning rate of 0.1.
We use the same backbone as described in \cite{zhou2020deep} for Digits-DG dataset and for CIFAR-10-C, the backbone employed is Wide Residual Network \cite{BMVC2016_87} (WRN) with depth and width of 16 and 4, respectively. We train for 200 epochs with the ResNet-18/ResNet-50 backbones, while AlexNet models are trained for 250 epochs. For the Digits-DG dataset, the $f_\theta$ network is trained for 100 epochs while WRN is trained for 200 epochs with a weight decay of 0.0005. In WRN, the initial learning rate of 0.1 is reduced to 0.02, 0.004 and 0.0008 at the 50th, 100th, and 150th epoch, respectively. As a regularization, we apply random image augmentations, namely horizontal flip (with probability 0.5), color jitter (with probability 0.8), greyscaling (with probability 0.2), applying Gaussian Blur (with a kernel of size 21) and adding Gaussian noise (with $\sigma = 0.2)$. Batch size for all the backbones is fixed to be 128. 

\paragraph{Training the VAE/GAN $G_\phi$:} 
VAE is trained with a learning rate of 0.005 and momentum of 0.9 using SGD optimizer for 350 epochs in all cases except for the Digits-DG dataset, where it is trained for 150 epochs. We train with a batch size of 64 in all settings. 
We use the standard VAE objective, with a combination of both the L1 and L2 losses for the reconstruction error term.
In case of GAN (on VLCS dataset), the learning rate is set to be 0.0002 and it is trained for 450 epochs.

\paragraph{Training the Classifier $\mathcal{C}_\psi$:}
The classifier $\mathcal{C}_\psi$ is trained using the Adam optimizer with a learning rate of 0.003 for 15, 20 and 30 epochs on the Digits-DG backbone, ResNet-based models and AlexNet-based models respectively, with a batch size of 64 in all cases.

\subsection{Iteration rate $\beta$}

\begin{table}[hbt!]
\centering
  \scalebox{0.95}{
  \begin{tabular}{lccc}
    \toprule
        \thead{Iteration \\rate ($\beta$)} &  \thead{Avg. number of \\iterations to stop}
        &  \thead{Total \\iterations ($M$)} & {Acc. (in \%)}
       
        \\
    \midrule
    0.05&465&20000&81.32\\
    0.01&602&20000&81.79\\
    0.005&2672&20000&80.92\\
    0.001&15976&20000&81.56\\

    \bottomrule
  \end{tabular}
  }
  \caption{Comparison of performance on Sketch as a target domain for different values of iteration rate $\beta$.}
  \label{table:beta}
\end{table}

To project a target example $\mathbf{z_t}$ on to the manifold of source features ($\mathcal{Z}_s$) during inference, we follow an iterative optimization procedure where a latent vector $\mathbf{u}$ (initially drawn from an isotropic Gaussian distribution) is optimized in the generative latent space of $G_\phi$. We examine the effect of varying the iteration rate $\beta$ in this optimization procedure through the following experiment: for a fixed iteration rate $\beta$, selected from the values $\{0.05, 0.01, 0.005, 0.001\}$, we perform the optimization process until the `optimal stopping point' (which corresponds to the ``elbow" point) is reached. We report the test accuracy in each case. As shown in Table \ref{table:beta}, we have validated that for a large number of iterations ($M=20000$), all reasonably small iteration rates give consistent performance on target domains. The second column in Table \ref{table:beta} reports the average of all optimal stopping points ($\text{n}^\ast$) of the target examples for the Sketch domain of the PACS dataset. 

Thus, we conclude that for reasonably small choice of $\beta$, \textit{the performance of the inference-time optimization procedure is unaffected by varying $\beta$}. Hence, we fix the iteration rate to be 0.01 in all our experiments for uniformity.


    


\begin{figure}[tb!]
\centering
\subfloat[][w/o $f_{\theta}$ (Deep All)]{\includegraphics[width=0.22\textwidth]{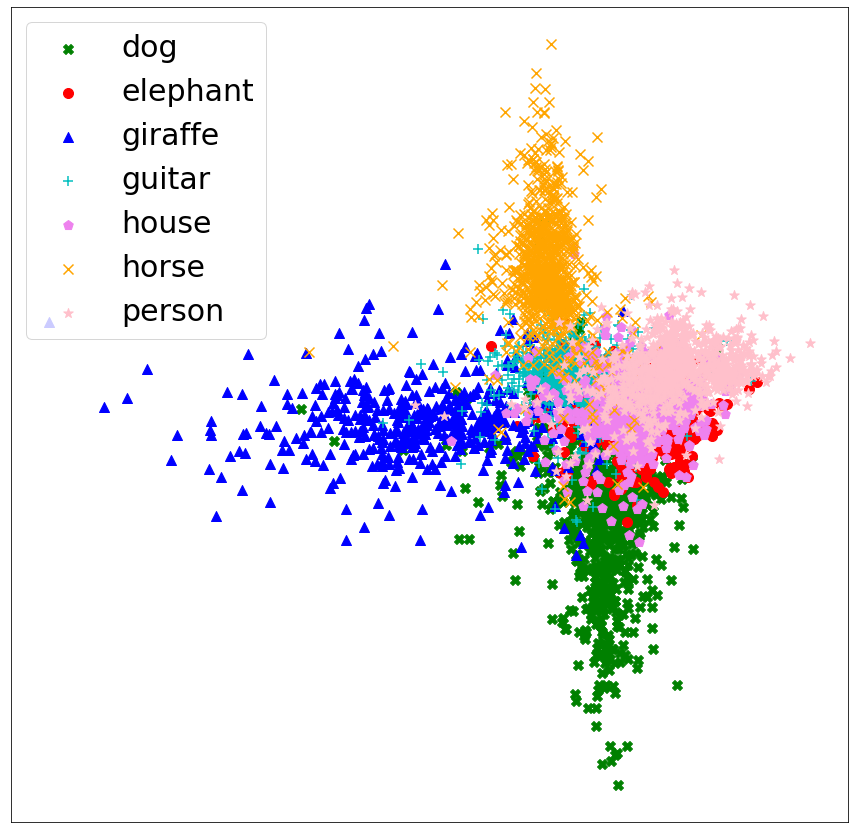}\label{fig:tsne_deepall}}
\hspace{0.2cm}
\subfloat[][with $f_{\theta}$]{\includegraphics[width=0.22\textwidth]{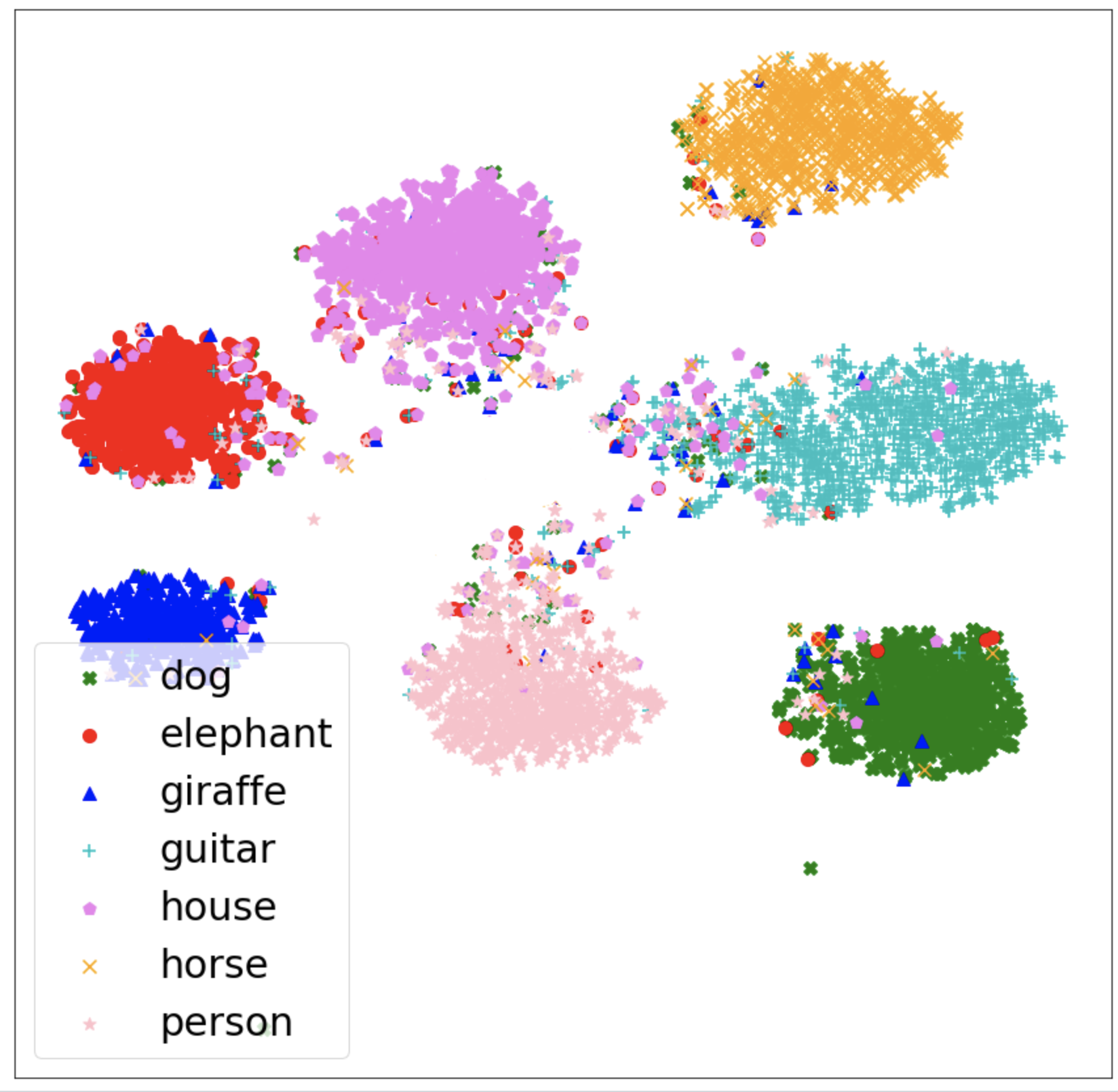}\label{fig:tsne_ftheta}}
\caption{T-SNE plot of features obtained from mixture of PAC (Photo, Art, Cartoon) as source domains using \textbf{a)} Deep All and \textbf{b)} $f_\theta$ network.}
\label{fig:tsne}
\end{figure}

\begin{figure}[tb!]
\centering
\subfloat[][source-source $\mathcal{A}$-distance]{\includegraphics[width=0.22\textwidth]{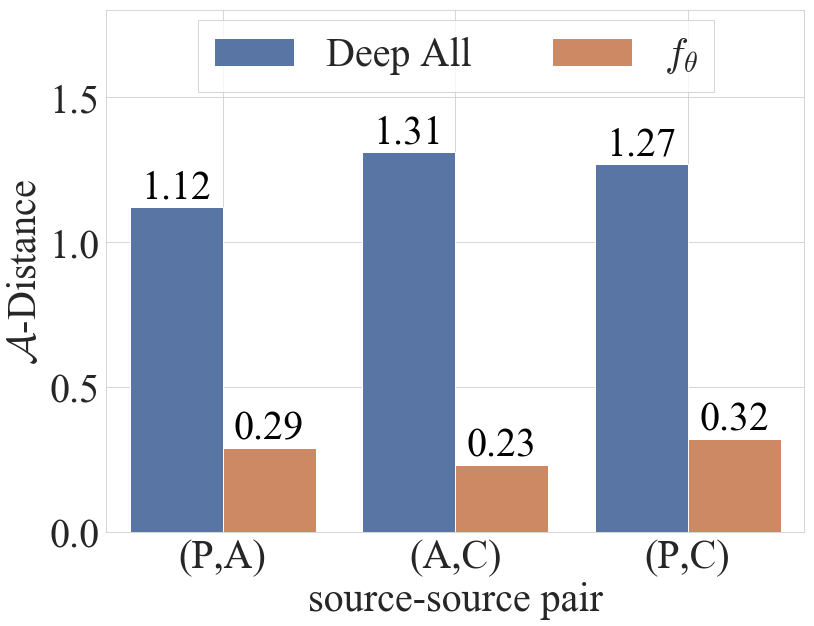}\label{fig:src-src_a_dist}}
\hspace{0.2cm}
\subfloat[][ablation with Digits-DG]{\includegraphics[width=0.22\textwidth]{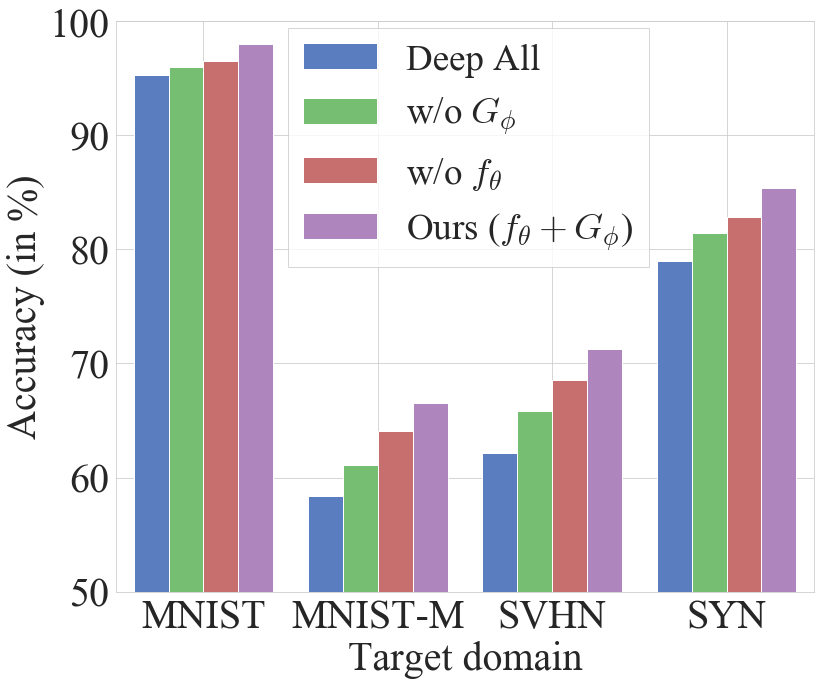}\label{fig:ablation_digits}}
\caption{\textbf{a)} $\mathcal{A}$-distance (lower is better) between sources using Deep All and  label-preserving features from $f_\theta$. \textbf{b)} Ablation on different components of the proposed method with Digits-DG dataset.}
\label{fig:add_hdiv_ablation_d_fg}
\end{figure}


\section{Additional Results}
\subsection{Class-wise alignment} \label{section:class_alignment}
Despite the large inter-domain discrepancy in the PACS dataset, it is observed that when we train the $f_\theta$ network on Photo, Art and Cartoon as source domains, we obtain features that align (or cluster) class-wise which is not the case with Deep All features on same source domains as shown by T-SNE plots in Figure \ref{fig:tsne}. This demonstrates the effectiveness of the objective of $f_\theta$ network in bringing examples from the same class together in the feature space $\mathcal{F}$ irrespective of the domains they belong to. It helps shallow classifiers (with a single fully connected layer as in $\mathcal{C}_\psi$) to distinguish between features which are more distinct across the class-labels. For features extracted from the Deep All model, the separation across the classes is less which makes the shallow classifier less robust and more error-prone.

We also demonstrate the effectiveness of the $f_\theta$ network in reducing source-source $\mathcal{H}$-divergence by comparing the $\mathcal{A}$-distance between features extracted from different source domains through Deep All and the $f_\theta$ network. These results are presented in \ref{fig:src-src_a_dist}.

\subsection{Low resource settings}
We train different models on each of the source domains, namely Photo, Cartoon and Art and test on the other three  domains of the PACS dataset. We observe that our method is significantly more data efficient compared to the baselines, as shown in Figure \ref{fig:scarce_data}, owing to the fact that $f_\theta$ learns label information from pairs of images and $G_\phi$ facilitates projection on the manifold of label-reserving source features.

\subsection{Ablation with Digits-DG}

We conduct ablation with Digits-DG \cite{zhou2020deep} dataset by training the proposed method with and without $f_\theta$ and $G_\phi$ networks. Figure \ref{fig:ablation_digits} shows the merit of the proposed components in improving the performance on the Digits-DG dataset.

\subsection{Robust DG}

Figure \ref{fig:corr_19} shows the robustness of our method against all 19 types of corruptions from the CIFAR-10-C dataset \cite{hendrycks2018benchmarking}. The proposed method trained on CIFAR-10 dataset, achieves generalization to the corrupted target images having severity level 5. 

\begin{figure}
\centering
  \scalebox{1}{
  \includegraphics[width=0.468\textwidth]{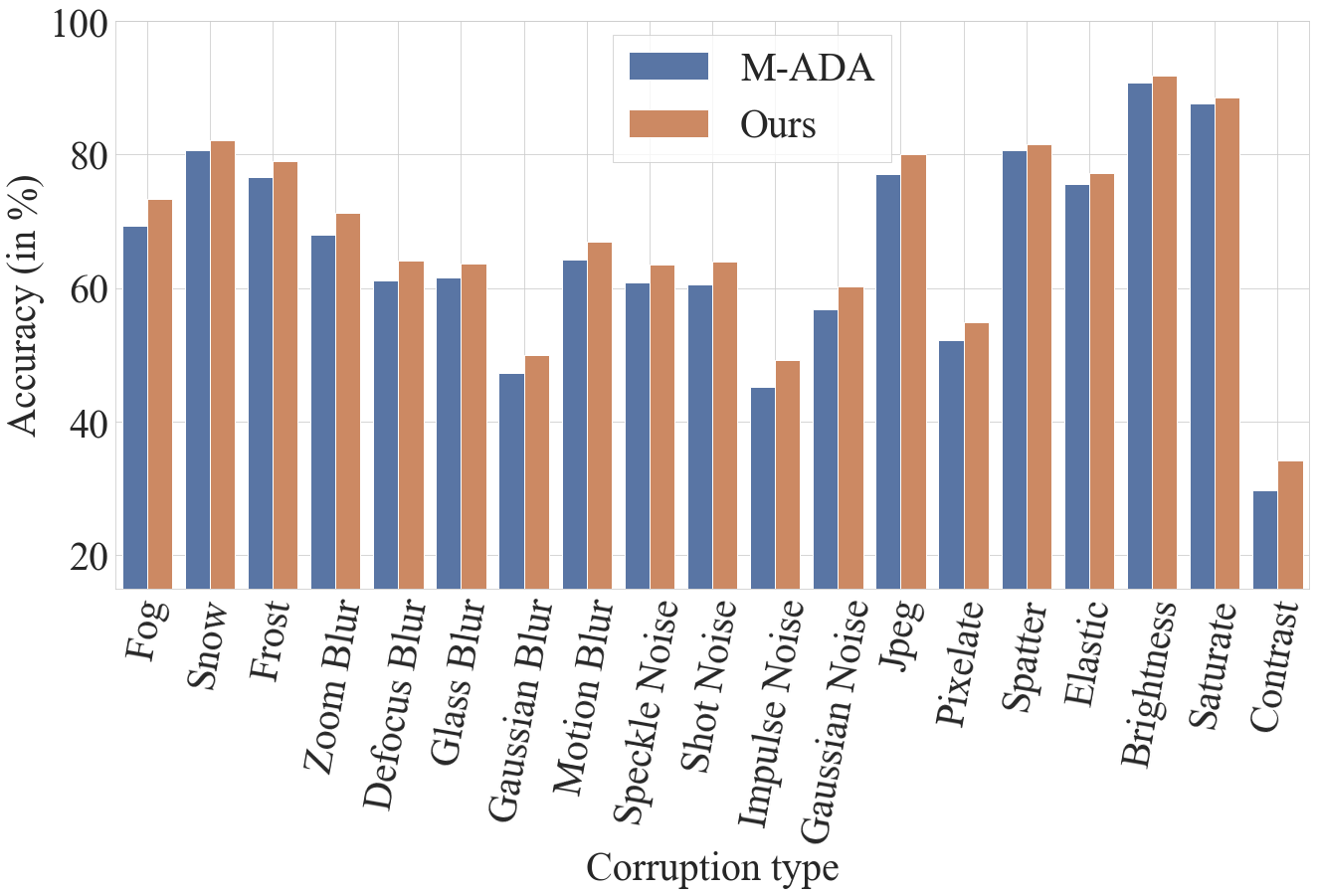}
  }
  \caption {Comparison of performance on all 19 Corruption types with severity level 5.}
  \label{fig:corr_19}
\end{figure}

\begin{figure*}
\centering
\subfloat[Multi-source DG on PACS with AlexNet]{
  \includegraphics[width=120mm]{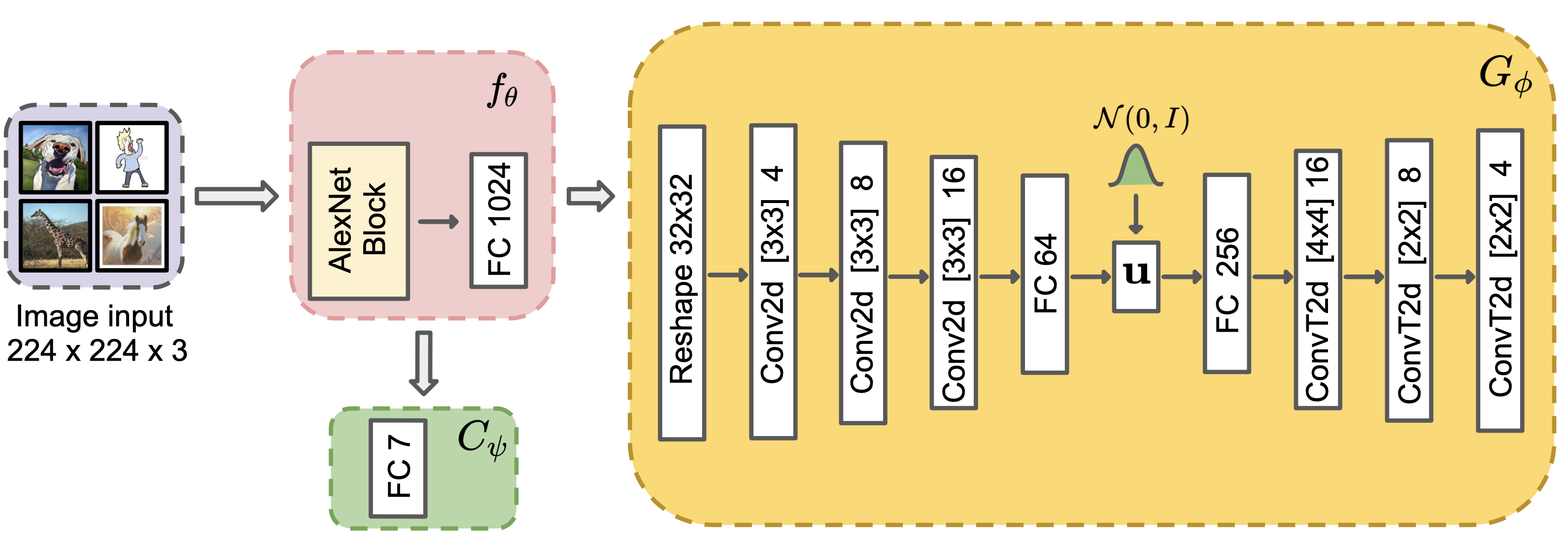}
}
\\
\vspace{0.9cm}
\subfloat[ Multi-source DG on PACS with RestNet-18]{
  \includegraphics[width=120mm]{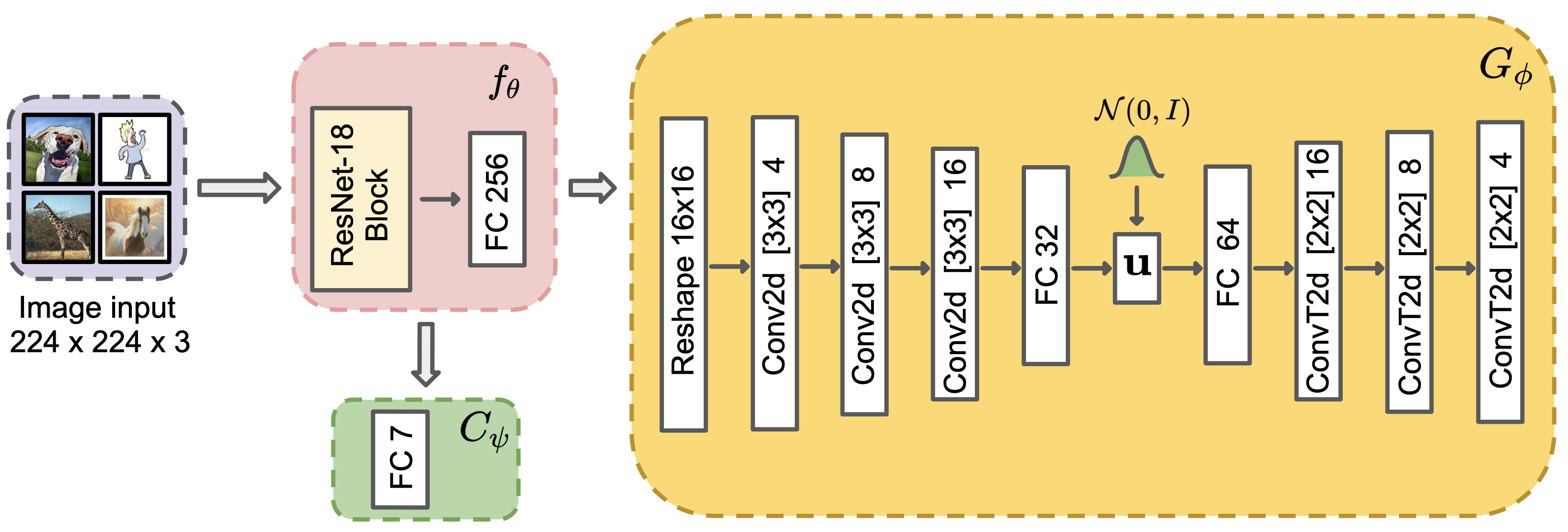}
}
\\
\vspace{0.9cm}
\subfloat[Multi-source DG on VLCS with AlexNet]{
  \includegraphics[width=120mm]{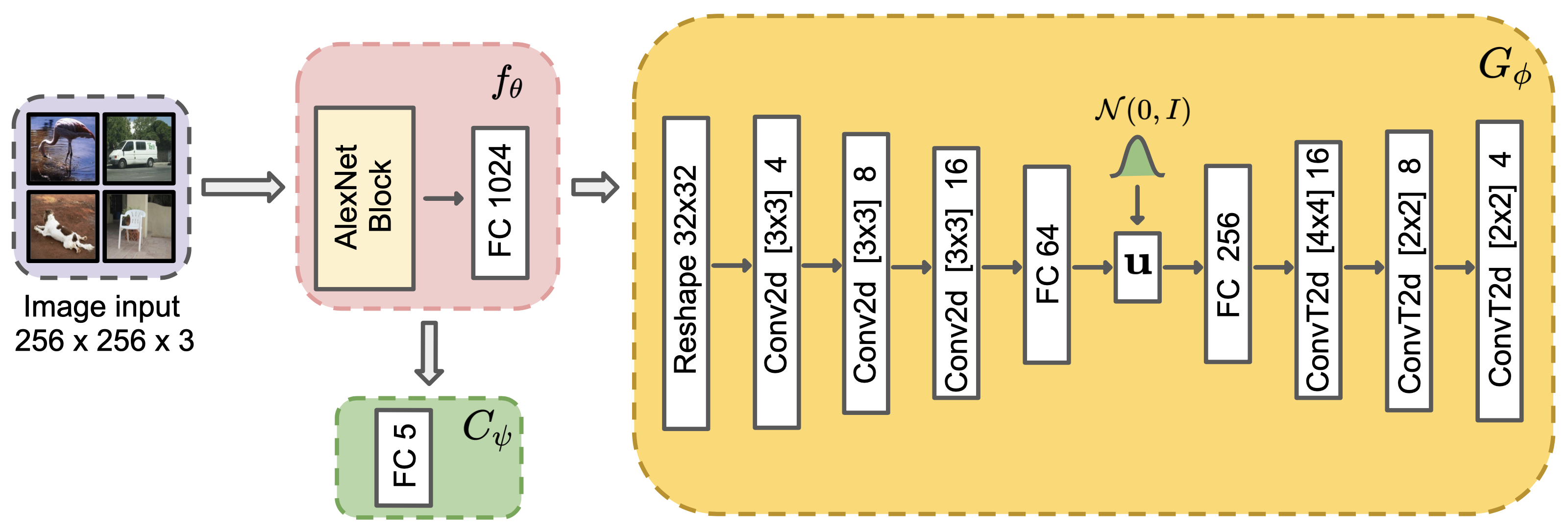}
}
\\
\vspace{0.9cm}
\subfloat[Multi-source DG on Office-Home with ResNet-18]{
  \includegraphics[width=120mm]{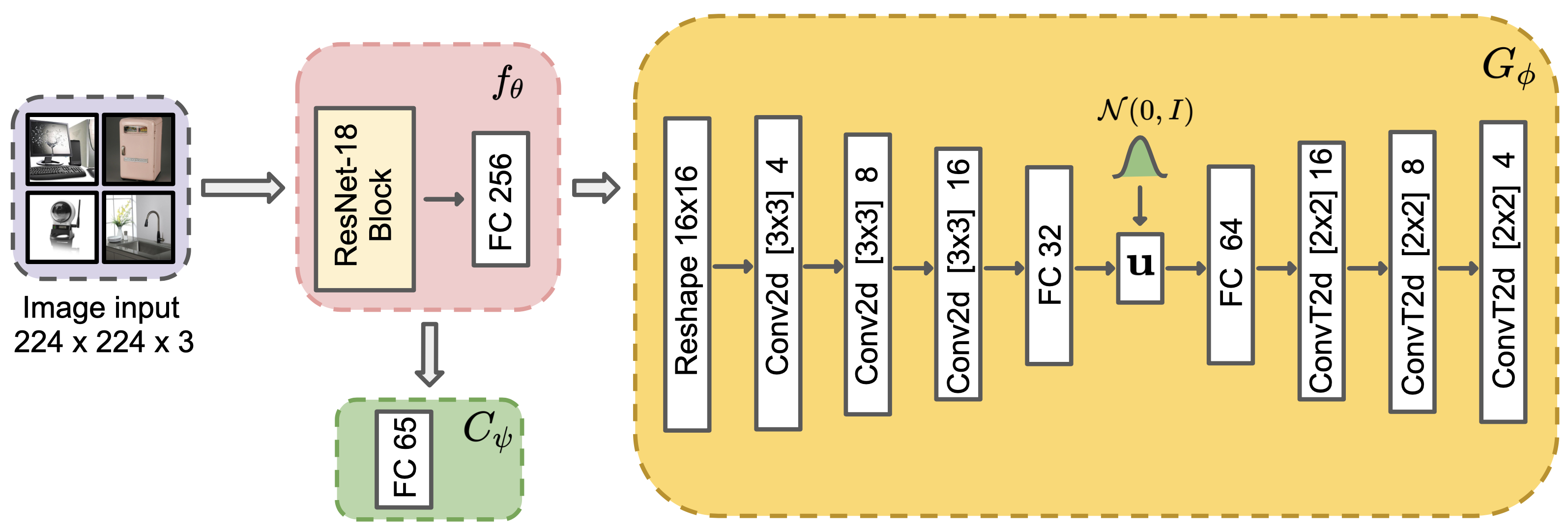}
}
\end{figure*}
\begin{figure*}
\ContinuedFloat
\centering
\subfloat[Multi-source DG on Digits-DG]{
  \includegraphics[width=120mm]{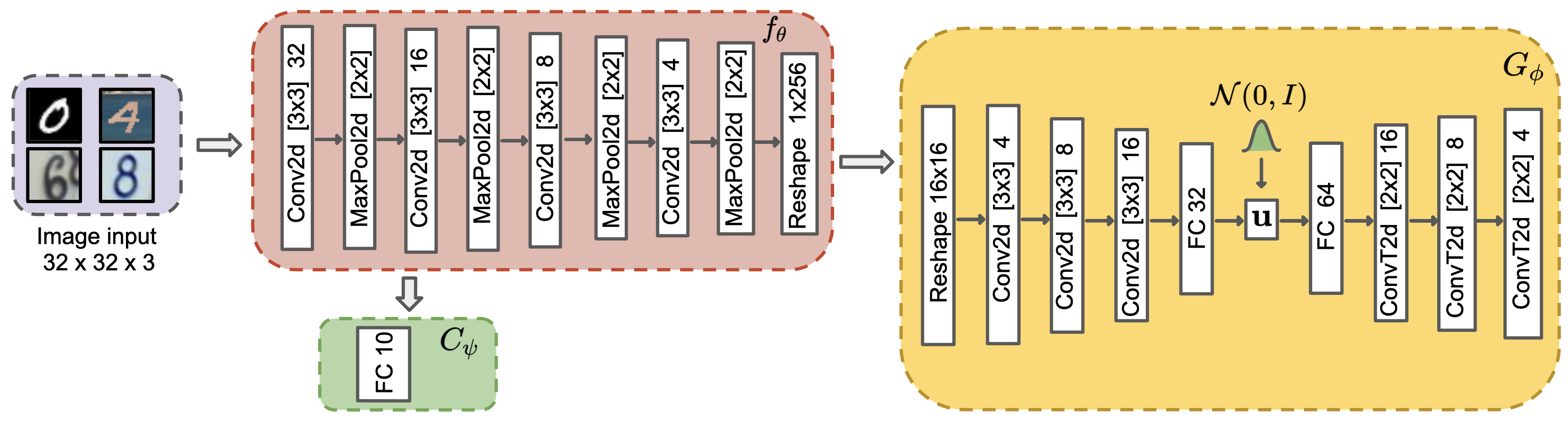}
}
\\
\vspace{0.8cm}
\subfloat[Multi-source DG on VLCS with GAN]{
  \includegraphics[width=120mm]{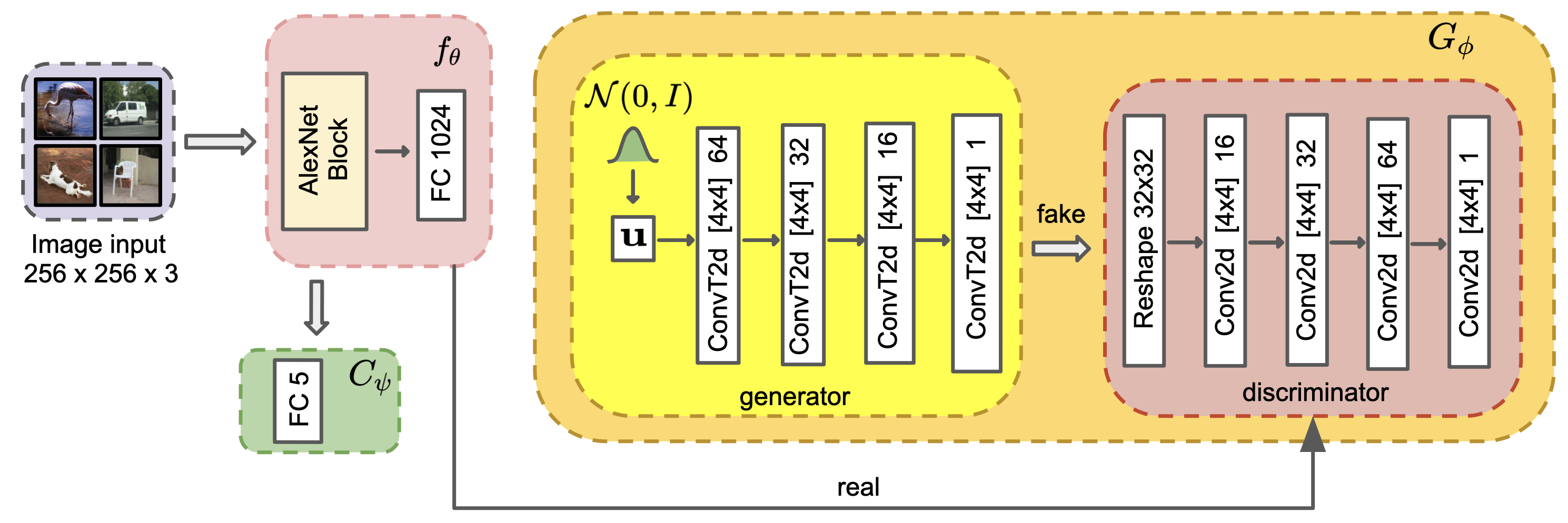}
}
\\
\vspace{0.8cm}
\subfloat[Multi-source DG on PACS with ResNet-50]{
  \includegraphics[width=120mm]{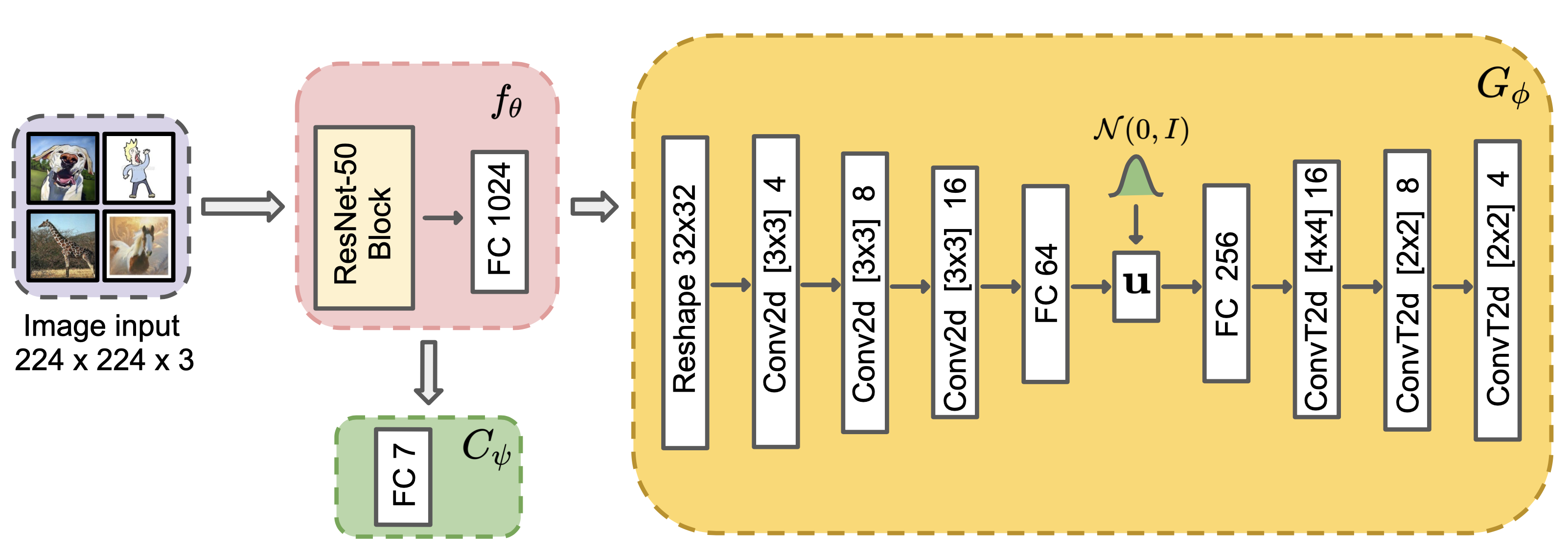}
}
\\
\vspace{0.8cm}
\subfloat[Robust DG on CIFAR-10-C with Wide Residual Network (WRN)]{
  \includegraphics[width=120mm]{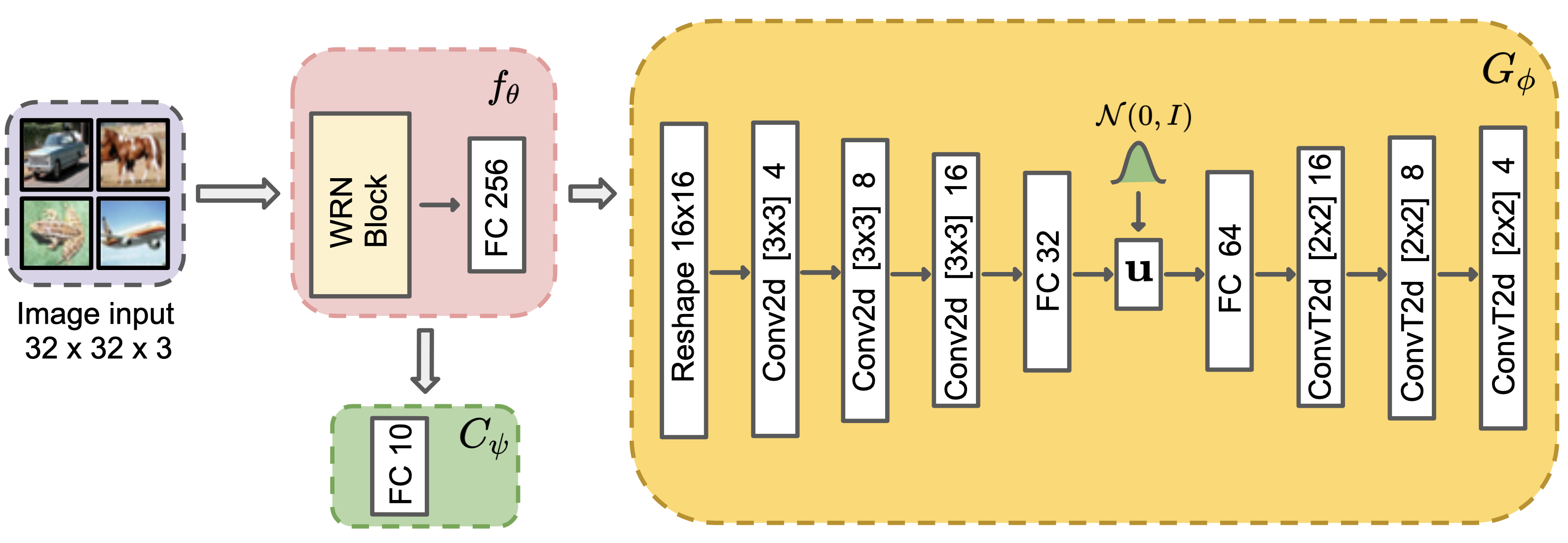}
}
\caption{
Architectures used for each dataset. Conv2d, ConvT2d, MaxPool2d and FC are Convolution 2D, Convolution Transpose 2D, 2D Max Pooling and Fully Connected layers, respectively. $\mathcal{C}_\psi$ is a single hidden layer classifier. Square bracket represents kernel size with number of output channels written right after it. $\mathbf{u}$ represents a vector from the latent space of $G_\phi$. ReLU/LeakyReLU activations are used in all blocks. All components $f_\theta$, $G_\phi$ and $\mathcal{C}_\psi$ are trained independently.}
\label{fig:architectures}
\end{figure*}

\section{Limitations}
One potential drawback of the proposed method is the requirement of a relatively larger inference time as optimization is performed for each target example.
 While this is a potential drawback of the method, we believe it doesn't prevent its practical use, as also noted in \cite{sun2020test}. Further, there are only a few hundred iterations needed (very less as compared to training) which takes about 50-100ms (on GPU) to execute. Also note that we do not update any of the model parameters during inference using the target data.

{\small
\bibliographystyle{ieee_fullname}
\bibliography{egbib}
}

\end{document}